\documentclass[letterpaper,10pt,twocolumn,twoside, conference]{ieeeconf}
\usepackage[letterpaper, left=0.8in, right=0.8in, bottom=0.8in, top=0.8in]{geometry}
\IEEEoverridecommandlockouts                    
%\overrideIEEEmargins
% \pagestyle{empty} 
% Text Appearance
% ----------------------
%\usepackage{palatino}
%\usepackage{mathpazo}
\usepackage{times}
\usepackage{setspace}
\usepackage{url}
% \spacing{1}
\usepackage[utf8]{inputenc}
\usepackage[T1]{fontenc}
% Graphics
% ------------
\usepackage{graphicx}		% For \includegraphics
%\graphicspath{{figures/}}	% Looks under figures/ sub-dir for files referenced by \includegraphics
% \usepackage{subcaption}
% \usepackage{wrapfig}
% \usepackage[format=plain,font=footnotesize,labelfont=bf,labelsep=period]{caption}
% \usepackage{caption}
% \usepackage{subcaption}
% \usepackage{sidecap} % For sidecaptions for figures
% \usepackage{subfig}
% \usepackage[export]{adjustbox}
% \usepackage{subcaption}
% \captionsetup{compatibility=false}
% \usepackage{float}
% \usepackage{subfigure}
% \usepackage{color, soul}
% \usepackage{dblfloatfix}

\usepackage{amsmath} % assumes amsmath package installed
\usepackage{amssymb}  % assumes amsmath package installed
\usepackage{amsthm}
\usepackage{mathtools}
\usepackage[normalem]{ulem}
\usepackage{paralist}	% For enumerations with better titled numbers
\usepackage[space]{grffile} % Space in filenames
\usepackage[dvipsnames,usenames]{xcolor}
\usepackage{bbm}
\usepackage{placeins} % Floatbarrier
\usepackage{array}
\usepackage{siunitx}
\usepackage{hyperref}
\usepackage{cleveref}
\usepackage{bm} % bold in math env
\usepackage{algorithm}
\usepackage{algpseudocode}
\usepackage{cleveref}
\usepackage{cite}

\newcommand\undermat[2]{%
  \makebox[0pt][l]{$\smash{\underbrace{\phantom{%
    \begin{matrix}#2\end{matrix}}}_{\text{$#1$}}}$}#2}

\newtheorem{theorem}{Theorem}
\newtheorem{corollary}{Corollary}
\newtheorem{lemma}{Lemma}
\theoremstyle{definition}
\newtheorem{definition}{Definition}
\theoremstyle{definition}

\theoremstyle{remark}
\newtheorem{remark}{Remark}
\theoremstyle{definition}
\newtheorem{property}{Property}
\theoremstyle{definition}
\newtheorem{assumption}{Assumption}
\theoremstyle{definition}

\theoremstyle{definition}
\newtheorem{problem}{Problem}
\theoremstyle{definition}

\theoremstyle{definition}

\theoremstyle{definition}

\theoremstyle{definition}
\newtheorem{example}{Example}

\newcommand{\mb}[1]{\mathbf{ #1 }}
\newcommand{\bs}[1]{\boldsymbol{#1}}
\renewcommand\b[1]{%
  \ifcat\noexpand#1\relax % check if the argument is a control sequence
    \bm{#1}% probably Greek
  \else
    \mathbf{#1}% single character
  \fi
}

\newcommand{\Lieg}{\mathcal{G}}

\newcommand{\order}{p}
\renewcommand{\time}{T}

%% Spaces
\newcommand{\R}{\mathbb{R}}

 % robust invariant
 % configuration space
 % disturbance space
 % auxiliary input space
 % Bezier curve

\newcommand{\Bez}{\b p}
\newcommand{\BEZ}{\b P}
\newcommand{\z}{\mb{z}} % Bernstein polynomial

%% Looks like Andrew can't read math unless it's bold, so...
%% Vectors
\newcommand{\x}{\mb{x}} % state
 % state
\renewcommand{\u}{\mb{u}} % input
% \renewcommand{\v}{\mb{v}} % auxiliary input

 % state
 % feedback controller
%% functions
\newcommand{\f}{\mb{f}}
\newcommand{\g}{\mb{g}}
%% Matrices

\usepackage{balance}

\title{\LARGE \textbf{B\'ezier Reachable Polytopes: Efficient Certificates for \\ Robust Motion Planning with Layered Architectures
}}
\author{Noel Csomay-Shanklin, and Aaron D. Ames%
\thanks{Authors are with the Department of Computing and Mathematical Sciences, California Institute of Technology, Pasadena, CA 91125, USA, Corresponding author: {\tt\small noelcs@caltech.edu}.}%
\thanks{This research is supported by Technology Innovation Institute (TII).}
}

\begin{document}
\maketitle

% When inter-layer interactions are neglected, planners may produce trajectories that are infeasible for the tracking controller due to physical constraints and external disturbances. 
% This disconnect can destabilize the closed loop system or lead to safety violations, even if the planned trajectory appears feasible in isolation. 
% To prevent such issues and enable guaranteed feasibility and constraint satisfaction of the entire closed-loop system, the planning layer must be informed by the capabilities and limitations of the tracking layer.

\begin{abstract}
    Control architectures are often implemented in a layered fashion, combining independently designed blocks to achieve complex tasks. 
    Providing guarantees for such hierarchical frameworks requires considering the capabilities and limitations of each layer and their interconnections at design time.
    To address this holistic design challenge, we introduce the notion of \textit{B\'ezier Reachable Polytopes} -- certificates of reachable points in the space of B\'ezier polynomial reference trajectories. 
    This approach captures the set of trajectories that can be tracked by a low-level controller while satisfying state and input constraints, and leverages the geometric properties of B\'ezier polynomials to maintain an efficient polytopic representation.
    As a result, these certificates serve as a constructive tool for layered architectures, enabling long-horizon tasks to be reasoned about in a computationally tractable manner.
\end{abstract}

\section{Introduction}
Modern control systems overwhelmingly employ layered architectures, wherein independent blocks are combined to achieve complex behaviors \cite{matni_layered}.
Typically, each block is designed in isolation and their interconnections are established in an \textit{ad-hoc} manner.
While this separation enables tractable controller design, achieving joint feasibility between layers is non-trivial.
To create safe and reliable autonomous systems, we need a cohesive theory that considers not only the individual behavior of each block, but also how their interaction effects overall performance and constraint satisfaction. 
In this work, we focus on advancing such a theory for layered architectures that include a trajectory generator (\textit{planner}) and a feedback controller (\textit{tracker}). Specifically, we leverage the geometric properties of B\'ezier polynomials to construct a certificate which enables the connection of such a planner-tracker setup with a high-level decision making layer while maintaining feasibility guarantees, as seen in Figure~\ref{fig:hero}.

The planner-tracker paradigm is is extremely common in robotic systems \cite{kuindersma2016optimization, grandia2023perceptive}, and has theoretical roots in hierarchically consistent control \cite{phi_related}, approximate simulation relations \cite{approx_sim}, and bisimulation \cite{bisimulation_relation}.
In such a framework, the planner ensures feasibility by adjusting the trajectories it generates based on a tracking certificate, i.e. a representation of what the tracking controller can reasonably accomplish.
This concept of layers communicating through achievable performance metrics serves as the foundation for robust motion planning \cite{mpc_book}.
For linear systems, such tracking certificates can be synthesized directly \cite{mayne2005robust}.
For nonlinear systems, generating tracking certificates is a more challenging task, and remains an active area of research. 
One option leverages Hamilton Jacobi reachability analysis to produce tracking upper bounds \cite{fastTrack}.
Alternatively, the linearization of the nonlinear system can be used to get approximate polytopic reachable sets \cite{wu_polytope}.
Depending on the existing system structure, notions of Input to State Stability \cite{sontag2008input} can also be used to constructively produce tracking certificates for nonlinear control systems \cite{csomay2022multirate_extended}. 
% Once a tracking invariant has been produced, it can readily be used by the planner.

\begin{figure}[t!]
    \centering
    \includegraphics[width=\linewidth]{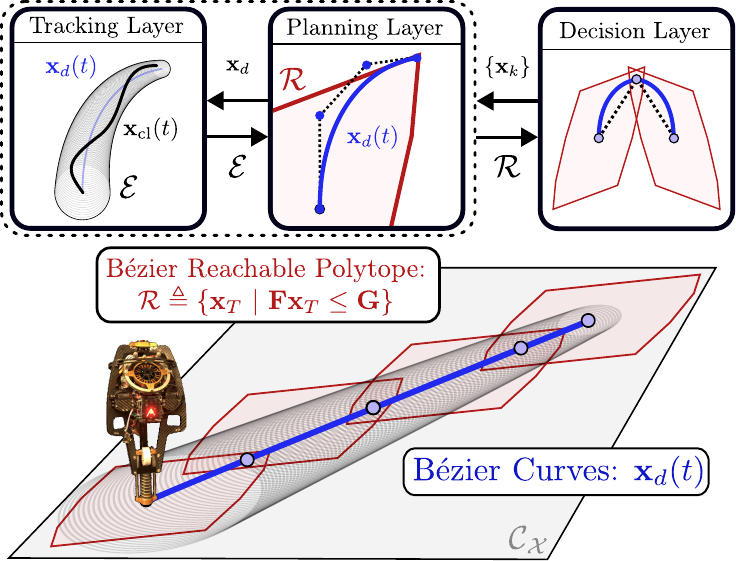}
    \vspace{-4mm}
    \caption{A depiction of the layered architectures investigated in this work, where the reachable set of the combined planning and tracking layers can be represented via a linear inequality in the space of B\'ezier polynomials.}
    \vspace{-6mm}
    \label{fig:hero}
\end{figure}

To extend the notion of guaranteed feasibility to a decision making layer, we require a certificate for the combined planner-tracker model, i.e. a representation of which states can be reached while satisfying state and input constraints.
For discrete systems, this is often done by planning sequences of discrete actions based on motion primitives \cite{donald1993kinodynamic, lavalle2001randomized}. Extending this to arbitrary continuous behaviors often requires solving two point boundary value problems \cite{webb_kinodynamic_2013}, which can be computationally expensive.
Importantly, however, the structure of the planning system can generally be imposed as a part of the design process. We leverage this design degree of freedom by enforcing the planner to generate B\'ezier polynomials; in doing so, we are able to ensure a computationally efficient reachable set representation.  

This paper presents a theory for layered architectures that rely on B\'ezier curves, which have become increasingly popular for motion planning \cite{marcucci_fast_2023, flores_bspline, csomay2022multirate_extended}. We take these ideas further by proving that if the planning layer is parameterized via B\'ezier polynomials, then the space of points which can be reached within a time interval is parameterizeable via a polytope---the \textit{B\'ezier Reachable Polytopes}. We show that these polytopes serve as performance certificates which enable holistic constraint satisfaction guarantees for layered architectures which utilize planning and tracking layers. We demonstrate the use of B\'ezier reachable polytopes in the context of completing long-horizon tasks on a simulated pendulum and experimentally on a physical 3D hopping robot with tight state and input constraints.

\section{Background}
% \subsection{Reduced-Order Models and Problem Statement}

Consider the following nonlinear control system:
\begin{align}
\label{eqn:nonl}
    \dot{\b x} = \b f(\b x, \b u),
\end{align}
with state $\b x \in\mathcal{X}\subseteq\R^N$, input $\b u\in\mathcal{U}\subseteq\R^M$, and whose dynamics $\mb f:\mathcal{X}\times\mathcal{U}\to\R^{N}$ are assumed to be continuously differentiable in their arguments. The system \eqref{eqn:nonl} will be represented as the tuple $\Sigma=\{\mathcal{X}, \mathcal{U}, \b f\}$. Due to the potential complexity of the dynamics $\b f$, directly synthesizing control actions for challenging tasks may be intractable. To address this, control engineers often rely on \textit{planning models}, which serve as template systems that enable desired behaviors to be constructed in a computationally tractable way. These models are defined as:
\begin{definition}
    A system $\Sigma_d=\{\mathcal{X}_d, \mathcal{U}_d, \b f_d\}$ is said to be a \textit{planning model} for a system $\Sigma$ if there exists a surjective mapping $\b \Pi:\mathcal{X}\to \mathcal{X}_d$ and a right inverse $\b \Psi:\mathcal{X}_d \hookrightarrow \mathcal{X}$ such that $\b \Pi\circ\b \Psi = \text{id}_{\mathcal{X}_d}$.
\end{definition}
As the dimensionality of $\mathcal{X}_d$ is typically much smaller than $\mathcal{X}$, there are many possible inverse mappings $\b\Psi$, each of which induce an embedding of the reduced state space $\mathcal{X}_d$ into the full state space $\mathcal{X}$. 
% The specific details of this embedding are coupled to the controller design process. 
To link a full-order system with a planning model, we must define a feedback controller $\b k : \mathcal{X}\times \mathcal{X}_d\times\mathcal{U}_d\to \mathcal{U}$ which aims to track the states of the planning model. This controller results in the following closed-loop system: 
\begin{align}
\label{eqn:cl}
    \dot{\b x} = \b f(\b x, \b k(\b x, \b x_d, \b u_d))\triangleq \b f_{\text{cl}}(\b x, \b x_d, \b u_d),
\end{align}
% should this take u_d? It's cumbersome, and I don't want to have to add a \bar{u}_d later...
which, given any initial condition $\b x_0\in\mathcal{X}$, has continuously differentiable solution $\b x_{\rm cl}:I \to \mathcal{X}$ over some interval $I\subset\R_{\ge 0}$ defined as:
\begin{align*}
    \b x_{\rm cl}(t) \triangleq \b x_0 + \int_0^t \b f_{\textrm{cl}}(\x_{\rm cl}(\tau), \b x_d(\tau), \b u_d(\tau))d\tau.
\end{align*}
A key desired property of this controller is its ability to maintain bounded tracking error:
\begin{definition}
\label{def:invar}
     Let $\Sigma_d$ be a planning model for system $\Sigma$. Given a desired trajectory $\b x_d(\cdot)$, a set-valued function $\mathcal{E}:\mathcal{U}_d \to \mathcal{P}(\mathcal{X})$ is a \textit{tracking certificate} for the system $\Sigma$ if:
     \begin{align*}
        \b x_{\rm cl}(t) \in \b \Psi(\b x_d(t)) \oplus \mathcal{E}(\b u_d(t)),
    \end{align*}
    where $\oplus$ denotes the Minkowski sum.
\end{definition}
\begin{example}
    Let $\Sigma$ represent the closed-loop system of a 3D hopping robot tracking a center of mass velocity command with whole-body model predictive control (MPC). In this scenario, the planning system $\Sigma_d$ is that of a single integrator and the mapping $\b\Pi$ projects the full state space of the hopper into the center of mass planar positions. The function $\b k$ and mapping $\b \Psi$ define the process of MPC, which takes in desired velocity trajectories and produces joint-space trajectories which can be tracked with bounded error via PD control as a function of how much input the planning system applies.
\end{example}
Given a planning model $\Sigma_d$, we will be interested in characterizing the space of all desired trajectories for the system $\Sigma$ which satisfy the following problem:
\begin{problem}
\label{prob:main}
    Consider a compact state constraint set $\mathcal{C}_{\mathcal{X}}\subset \mathcal{X}_d$ and compact input constraint set $\mathcal{C}_\mathcal{U}\subset\mathcal{U}$. Produce trajectories $\b x_d(\cdot)$ which when tracked achieve the following:
    \begin{itemize}
        \item $\b\Pi(\b x_{\rm cl}(t)) \in \mathcal{C}_\mathcal{X}$ for all $t\in I$,
        \item $\b k(\b x_{\rm cl}(t), \b x_d(t), \b u_d(t))\in\mathcal{C}_\mathcal{U}$ for all $t\in I$.
    \end{itemize}
\end{problem}
We will go about solving this problem by appropriately constraining the space of trajectories $\b x_d(\cdot)$, wherein $\b x_d(\cdot)$ will be a design parameter. 
Although planning models can have any system structure (and are useful as long as there exist an appropriate mapping $\b\Psi$ and controller $\b k$), in order to make constructive guarantees we make further assumptions about the planning dynamics. Specifically, consider a nonlinear planning model system with coordinates $\mb q_d\in\R^m$, state 
% $\mb x = [ \mb q^{\top}, \mb q^{(1)\top}, \cdots, \mb q^{(\gamma-1)\top}]^{\top} \in \mathbb{R}^{n}$ where $\mb q^{(i)} = \frac{d^{i} \mb q}{dt^{i}}$, $i \in \{1, \cdots, \gamma\}$, and $\gamma \in \mathbb{N}$. 
$\mb x_d = [\mb q_d^\top, \dot{\mb q}_d^\top, \ldots, \mb {q}^{(\gamma-1)}_d {}^\top]^\top \in \R^n$ for some $\gamma \in \mathbb{N}$, and control-affine dynamics of the form:
\begin{align}
    \label{eqn:rom_int}
    % \mb q^{(\gamma)} &= \mb f(\mb x) + \mb g(\mb x)\mb u, 
    \dot{\b x}_d  =\begin{bmatrix} \b 0 & \b I_{n-m} \\ \b 0&\b 0\end{bmatrix} \b x_d + \begin{bmatrix} \b 0 \\ \b f_d(\b x_d)\end{bmatrix} + \begin{bmatrix} \b 0 \\ \b g_d(\b x_d) \end{bmatrix}\b u_d,
\end{align}
where $ \b I_{n-m}$ is an identity matrix of size $n-m$, the $\b 0$ matrices are appropriately sized, $\mb u_d\in\R^m$ is the input, and the drift vector $\mb f_d:\R^n \to \R^m$ and actuation matrix $\mb g_d:\R^n\to\R^{m\times m}$ are assumed to be locally Lipschitz continuous on $\R^n$. We define a dynamically feasible trajectory for such a system as:

\begin{definition}[\textit{Dynamically Feasible Trajectory}]
    Given a time interval $I\triangleq [0,T]$ for $T\in\R_{\ge 0}$, a piecewise continuously differentiable function $\x_d:I\to\R^n$ is a \textit{dynamically feasible trajectory} for $\Sigma_d$ if there is a piecewise continuous function $\mb u_d:I\to\R^m$ such that:
\begin{equation}
    \label{eqn:dynadmistraj}
    \dot {\b x}_d(t) = \f_d(\x_d(t))+\g_d(\x_d(t))\mb u_d(t),
\end{equation}
for almost all $t\in I$.
\end{definition}

In order to design dynamically feasible trajectories $\b x_d(\cdot)$ for $\Sigma_d$, we must reason about integral curves of the planning model dynamics.
To parameterize dynamically feasible trajectories of \eqref{eqn:rom_int} via B\'ezier curves, we assume that the planning system $\Sigma_d$ is fully actuated:
\begin{assumption}
    \label{ass:reldeg}
    We have that $\mb f_d(\mb 0) = \mb 0$ and the matrix $\mb g_d(\mb x_d)$ is invertible for all $\mb x_d\in\mathcal{X}_d$.
\end{assumption} 
% With this structure, let $\mb x_d:I\triangleq[0,\time]\to \R^n$ for $\time\in\R_{>0}$ be a desired trajectory for the system, decomposed as $\mb x_d(t) = [\mb q_d(t)^\top, \dot{\mb q}_d(t)^\top, \ldots, \mb {q}_d^{(\gamma-1)}(t)^\top]^\top$ with $\mb q_d^{(i)}:I\to \R^m$ for $i=0,\ldots, \gamma-1$.

\section{B\'ezier Curves}
A curve $\b b:I\triangleq[0,\time] \to \R^m$ for $\time>0$ is said to be a B\'ezier curve \cite{kamermans2020primer} of order $\order\in\mathbb{N}$ if it is of the form:
\begin{align*}
    \b b(t) =   \Bez\b z(t),
\end{align*}
where $\b z:I \to \R^{\order+1}$ is a Bernstein basis polynomial of degree $\order$ and $\Bez \in \R^{m\times\order+1}$ are a collection of $\order+1$ \textit{control points} of dimension $m$. There exists a matrix $\b H\in\R^{p+1\times p+1}$ (as in \cite{csomay2022multirate_extended}) which defines a linear relationship between control points of a curve $\b b$ and its derivative via:
% \footnote{For a complete construction of the matrices discussed in this section, see the Appendix of the long version of this article \cite{long_form}}
$$\dot{\b b}(t) =  \b \Bez \b H \b z(t).$$
This enables us to define a state space curve $\b B:I\to \R^{n}$:
\begin{align}
\label{eqn:BezDef}
	\b B(t) \triangleq \begin{bmatrix} \b b(t) \\ \vdots \\\b b^{(\gamma-1)}(t)\end{bmatrix} =  \underbrace{%
			\begin{bmatrix}\Bez \\ \vdots \\ \Bez\b H^{\gamma-1}\end{bmatrix}}_{\triangleq \bs{\BEZ}}\b z(t).
\end{align}
The columns of the matrix $\BEZ\in\R^{n\times p+1}$, denoted as $\BEZ_j$ for $j=0,\ldots,p$, represent the collection of $n$ dimensional control points of the B\'{e}zier curve $\b B$ in the state space.
Furthermore, if we take $\b x_d (\cdot)\equiv  \b B(\cdot)$ to represent a desired trajectory of B\'ezier curves, we observe that:
\begin{equation*}
    \dot{\b x}_d = \begin{bmatrix}
        \b 0 & \b I_{n-m} \\ \b 0 & \b 0
    \end{bmatrix}\b x_d + \begin{bmatrix}\b 0 \\ \mb{f}_d(\b x_d)\end{bmatrix} + \begin{bmatrix}\b 0 \\ \mb{g}_d(\b x_d)\end{bmatrix}\mb{u}_d, 
\end{equation*}
for the continuous input signal:
\begin{equation}
    \label{eqn:fl_u}
    \mb{u}_d = \mb{g}_d(\b x_d)^{-1}\Big(\mb{q}_d^{(\gamma)}-\mb{f}_d(\b x_d)\Big).
\end{equation}
Therefore, any B\'ezier curve $\b B(\cdot)$ constructed via \eqref{eqn:BezDef} is a dynamically feasible trajectory for our planning model. As such, we can leverage B\'ezier curves towards the design of trajectories $\b x_d(\cdot)$ satisfying Problem~\ref{prob:main}. B\'ezier curves enjoy a number of desirable properties:

 \begin{property}[Convex Hull \cite{kamermans2020primer}]
	$$\b B(t) \in \text{conv}(\{\BEZ_j\}),~~ j=0,\ldots, \order,~~\forall t\in I.$$
\end{property}
%

% With this, it is straightforward to show the following:
%
\begin{property}[Linear Bounding]
\label{prop:linBound}
	For a vector $\b d\in \R^k$ and a matrix $\b C\in \R^{k\times n}$, we have: $$\b C \BEZ_j \le \b d,~~ j=0,\ldots, \order \implies \b C\b B(t) \le \b d,~~\forall t\in I.$$
\end{property}
\begin{proof}
 The convex hull property of B\'{e}zier curves implies that for any $t\in I$ and any row $\b c\in \R^n$ of $\b C$ with corresponding value $d\in \R$ of $\b d$, we may write:
    \begin{align*}
	    \mb c \b B(t) &= \sum_{j=0}^{p} \lambda_j(t) \b c \BEZ_j
    \end{align*}
    for some $\lambda_j(t)\geq 0$ and $\sum_{j=0}^{p} \lambda_j(t) = 1$. 
    Therefore,
    \begin{align*}
	    \mb c \b B(t) &\le \sum_{j=0}^{p} \lambda_j(t)\max_j\b c\BEZ_j = \max_j \b c\BEZ_j \le d,
\end{align*}
	as each $\bm\BEZ_j$ term satisfies $\mb c\BEZ_j \le d$ by assumption.
\end{proof}
\begin{figure}
    \centering
    \includegraphics[width=\columnwidth]{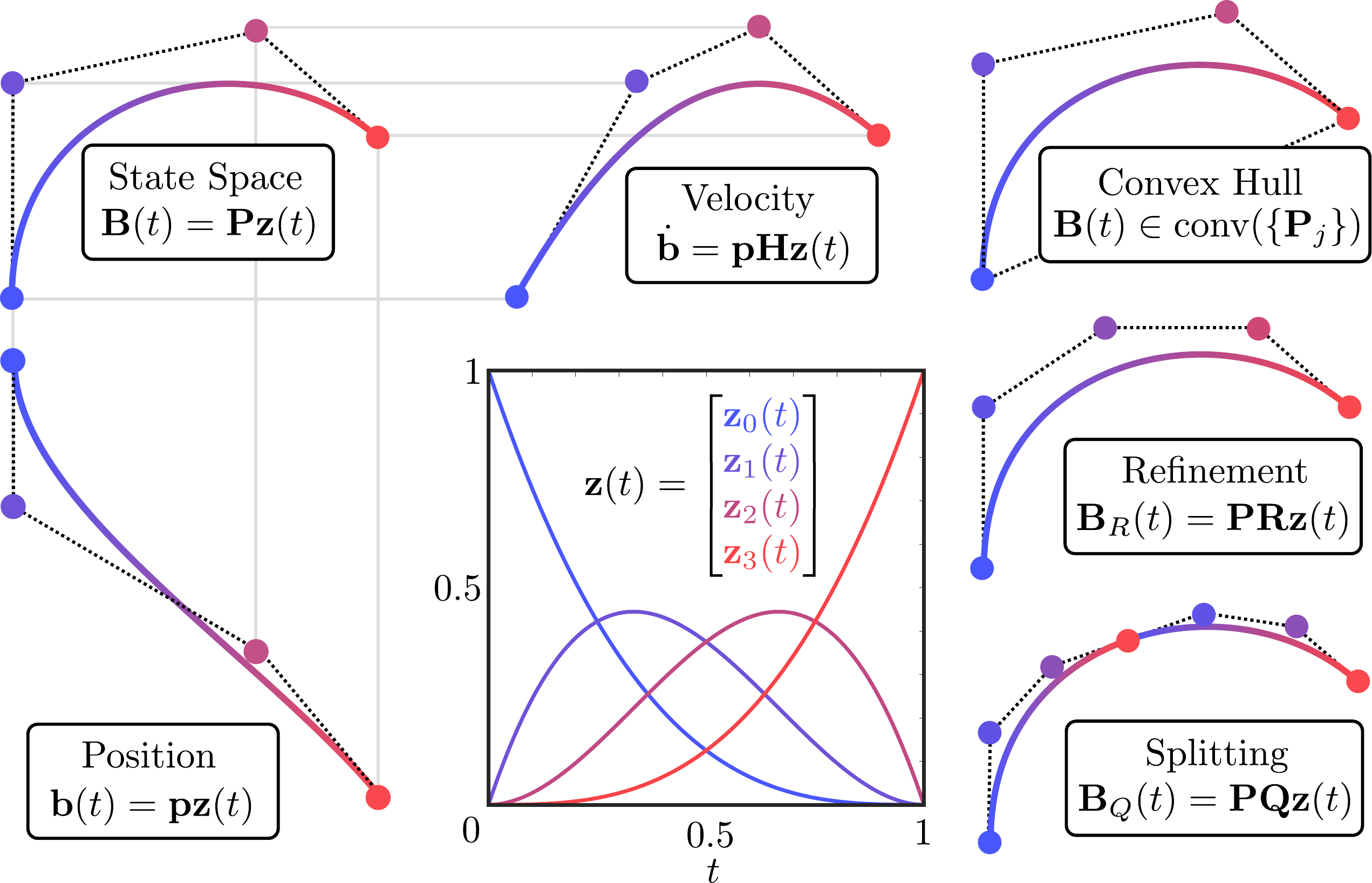}
    \vspace{-5mm}
	\caption{A visual guide to the properties of B\'ezier curves.}
 \vspace{-5mm}
    \label{fig:bez}
\end{figure}
% \begin{remark}
%     Herein lies the beauty of B\'ezier curves -- these properties allows us to make conclusions about the behavior of continuous time curves while only reasoning about the discrete collection of control points. 
% \end{remark}
% Furthermore, we have that if the control points satisfy linear inequalities, then so does the continuous-time curve: 
% \noindent This property will allow us to express convex constraints along the entire curve as a finite collection of convex constraints on the control points. 

We will specifically be interested in producing B\'ezier curves that connect initial conditions $\mb{x}_d(0)\in\mathcal{X}_d$ and terminal conditions $\mb{x}_d({\time})\in\mathcal{X}_d$ in a fixed time $\time$. 
Given such boundary conditions, a B\'ezier curve $\mb{B}(\cdot)$ which connects them must satisfy the following set of equality constraints:
\begin{align}
	\b b^{(k)}(0) &= \Bez \b H^k \b\z(0) = \b q^{(k)}_d(0) ,~~k=0,\ldots, \gamma-1, \label{eqn:ic}\\
	\b b^{(k)}(\time) &= \Bez \b H^k \b z(\time) = \b q^{(k)}_d(\time),~~k=0,\ldots, \gamma-1.\label{eqn:ec}
\end{align}
These constraints lead to the following Property:
\begin{property}[Boundary Values]
\label{prop:disc_to_bez}
    Given a time $\time > 0$, two points $\mb{x}_0,\mb{x}_{\time}\in \R^n$, and order $p \ge 2\gamma-1$, there exists a matrix $\b D\in\R^{p+1\times 2n}$ such that any curve $\b x_d(\cdot)$ with control points satisfying:
    \begin{align}
    \label{eqn:D_matrix}
        \b \Bez \b D= \begin{bmatrix} \b x_0^\top & \b x_\time^\top\end{bmatrix}
    \end{align} 
    also satisfies $\b x_d(0) = \b x_0$ and $\b x_d(\time) = \b x_\time$.
\end{property}
\begin{proof}
We begin by noting that $\b z(0)=[1~\b 0_{1\times p}]^\top$ and $\b z(\time)= [\b 0_{1\times p}~1]^\top$.
Then, collecting the constraints in \eqref{eqn:ic} and \eqref{eqn:ec} yields:
\begin{align*}
    \b p\begin{bmatrix}\b H^0_0 & \b H^1_0 &\hdots & \b H^{\gamma-1}_0\end{bmatrix}= \b x_0.\\
    \b p\begin{bmatrix}\b H^0_p & \b H^1_p &\hdots & \b H^{\gamma-1}_p\end{bmatrix}= \b x_T.
\end{align*}
where $\b H^i_j$ denotes the $j^{th}$ column of the matrix $\b H$ raised to the $i^{th}$ power. It can be algebraically verified that $\b H$ has the form:
{
\setlength{\arraycolsep}{2.5pt}
\begin{align*}
    \b H^{i}_0 &= \begin{bmatrix} \undermat{i+1}{\star & \cdots & \star} & \undermat{p-i}{0 &\cdots & 0}\end{bmatrix}^\top,~~
    \b H^{i}_p = \begin{bmatrix} \undermat{p-i}{0 &\cdots & 0}& \undermat{i+1}{\star & \cdots & \star}\end{bmatrix}^\top, \\
\end{align*}
}
with nonzero entries $\star$. Taking $\b D\in \R^{p+1\times 2n}$ as:
\begin{align*}
    \b D \triangleq \begin{bmatrix}\b H^0_0 & \b H^1_0 &\hdots & \b H^{\gamma-1}_0 &\b H^0_p & \b H^1_p &\hdots & \b H^{\gamma-1}_p\end{bmatrix},
\end{align*}
in the case that $p\ge 2\gamma-1$ the columns are linearly independent and thus the matrix $\b D$ has full column rank, implying that a solution $\Bez$ exists (but is not unique unless $p=2\gamma-1$).
\end{proof}
\vspace{-3mm}
\begin{remark}
    In the case that $p > 2\gamma-1$, the constraint \eqref{eqn:D_matrix} is under-determined and can be resolved via a least squares solution, allowing for additional cost terms to be optimized.
\end{remark}
Finally, we present one additional property which will be useful in increasing the resolution of B\'ezier curves and reduce the conservatism of their upper bounds. To do this, we introduce the notion of a refinement of the interval $I$ as:
\begin{definition}
\label{def:ref}
    A \textit{$k$-refinement} of an interval $[0,T]$ is a collection of times $\{\time_i\}$ for $i=0,\ldots,k$ and associated intervals $\{[\time_{i-1}, \time_i]\}$ with $\time_{i-1} < \time_i$, $\time_0 = 0$, and $\time_k = T$.
\end{definition}
From this, we can split a B\'ezier polynomial $\b B(\cdot)$ into a sequence of B-splines:
\begin{property}[Splitting \cite{kamermans2020primer}]
Given the control points $\b P$ of a B\'ezier polynomial defined over the interval $I$ and a $k$-refinement of $I$, there exists a collection of matrices $\{\b Q_i\}$ for $i=1,\ldots,k$ 
% with $\b Q_i\in \R^{p+1 \times p+1}$ 
such that $\b B_Q(t) = \b P \b Q \b z(t)$ satisfies $\b B_Q(t) \triangleq \b B(\time_i + \frac{t}{T}(\time_{i+1} - \time_i))$ for all $t \in I$.
\end{property}

% \begin{property}[Refinement \cite{kamermans2020primer}]
% Given control points $\b P$ of an order $p$ B\'ezier polynomial and a desired order $r>p$, there exists a matrix $\b R\in \R^{p+1 \times r+1}$ such that $\b B_R(t) = \b P \b R \b z'(t)$ for $\b z':I\to\R^{r+1}$ satisfies $\b B_R(t) = \b B(t)$ for all $t \in I$.
% \end{property}

% \noindent A depiction of the above properties can be seen in Figure~\ref{fig:bez}.

% \begin{remark}
% 	Lemma \ref{lem:exist} can be viewed as a requirement on the number of control points in order to full determine systems of a specific order. If more control points than the minimum are added, then the solution is simply non-unique, allowing for additional cost terms to be optimized in a least squares problem.
% \end{remark}

Finally, it will be useful to operate with the (column-wise) vectorized versions of $\Bez$ and $\BEZ$, defined as $\vec\Bez\triangleq\text{vec}(\Bez)\in\R^{m(p+1)}$ and $\vec\BEZ\triangleq\text{vec}(\BEZ)\in\R^{n(p+1)}$. With these new representations, we have the following equivalences:
\begin{align*}
	\vec\BEZ &= \vec{\b H} \vec\Bez \\
	  \vec{\b D}\vec\Bez &= \text{vec}\left(\begin{bmatrix} \b x_0^\top & \b x_T^\top\end{bmatrix}\right)
\end{align*}
with $\vec{\b H}$ and $\vec{\mb D}$ the vectorized versions of $\b H$ and $\b D$, respectively.
% defined in the Appendix of \cite{long_form}.
With these tools, we will next discuss how to enforce state and input constraints on B\'ezier curves via linear constraints imposed on the control points.

\section{State and Input Constraint Satisfaction}
% Consider a B\'ezier curve of order $p \ge 2\gamma -1$, 
To begin, we make the following assumption about the constraint sets $\mathcal{C}_\mathcal{X}$ and $\mathcal{C}_\mathcal{U}$:
\begin{assumption}
    The state constraint set is described by $\mathcal{C}_{\mathcal{X}} = \{\b x_d\in\mathcal{X}_d~|~ \b C\b x_d \le \b d\}$ with $\b C \in \R^{k\times n}$ and $\b d \in \R^k$. Furthermore, we have that the input constraint set $\mathcal{C}_\mathcal{U} \triangleq \{\b u\in\R^m~|~\| \b u\|_\infty \le u_{\text{max}}\}$ for $u_{\text{max}}\in\R_{>0}$, i.e., we have a box input constraint.
\end{assumption}
The following constructions can also be performed with a positive diagonal weighting matrix $\mb W\in\mathbb{S}_{\succ 0}^m$ to scale the box constraint on $\mb u$, such that $\Vert\mb{W}\mb{u}\Vert_\infty \leq u_{\text{max}}$. Such constraints are extremely common in robotic systems.
From this point on, $\|\cdot\|$ will represent the $\infty-$norm unless otherwise stated. Given a tracking certificate set, we can define its upper bound ${e}:\mathcal{U}_d \to \R_{\ge 0}$ as:
\begin{align}
    \label{eqn:tracking_ub}
    {e}(\b u_d) \triangleq \sup_{\b e \in \mathcal{E}(\b u_d)}\|\b e\|.
\end{align}
If $\mathcal{E}$ is described as the zero sublevel set of a function that is differentiable with respect to $\b u_d$, then $e(\b u_d)$ is locally Lipschitz with respect to $\b u_d$. Along with this, we assume Lipschitz properties of $\b \Pi$ and $\b \Psi$:
\begin{assumption}
    The functions $\b \Pi$, $\b \Psi$, and ${e}$ are Lipschitz continuous over the domain $\mathcal{C}_\mathcal{X}$ with constants $L_{\b\Pi}$, $L_{\b\Psi}$ and $L_{{e}}$, respectively.
\end{assumption}

The remainder of the section will be devoted to proving the following statement:
\begin{theorem}
\label{thm:BezPoly}
    Let system $\Sigma_d$ be a planning model for system $\Sigma$ with tracking certificate $\mathcal{E}$. There exist matrices $\b F$ and $\b G$ such that any B\'ezier curve $\b B:I\to \mathcal{X}_d$ with control points $\b p$ satisfying:
    \begin{align*}
        \b F \vec{\b p} \le \b G,
    \end{align*}
    when tracked results in the closed loop system satisfying $\b \Pi(\b x_{\rm cl}) \in \mathcal{C}_\mathcal{X}$ and $\b k(\b x_{\rm cl}, \b x_d, \b u_d)\in\mathcal{C}_\mathcal{U}$ for all $t\in I$.
\end{theorem}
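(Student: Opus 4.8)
The plan is to translate each of the two constraints in Problem~\ref{prob:main} into a finite set of linear inequalities on the control points $\vec{\b p}$, then stack them to form $\b F$ and $\b G$. The key enabling facts are already in hand: any B\'ezier curve $\b B(\cdot)$ built via \eqref{eqn:BezDef} is dynamically feasible for $\Sigma_d$ (so the planner produces a legitimate $\b x_d(\cdot)$ and associated $\b u_d(\cdot)$ via \eqref{eqn:fl_u}), and the tracking certificate $\mathcal{E}$ tells us that $\b x_{\rm cl}(t) \in \b\Psi(\b x_d(t)) \oplus \mathcal{E}(\b u_d(t))$. The linear structure will come from Property~\ref{prop:linBound}: a linear inequality holds along the whole curve as soon as it holds at the control points.

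First I would handle the state constraint $\b\Pi(\b x_{\rm cl}(t))\in\mathcal{C}_\mathcal{X}$. Using the tracking certificate, $\b\Pi(\b x_{\rm cl}(t))$ lies within distance $L_{\b\Pi}\,e(\b u_d(t))$ of $\b\Pi(\b\Psi(\b x_d(t))) = \b x_d(t)$; more precisely $\b C\b\Pi(\b x_{\rm cl}(t)) \le \b C\b x_d(t) + \|\b C\|\,L_{\b\Pi}\, e(\b u_d(t))\,\b 1$ (row-wise, absorbing each row's norm). So it suffices to enforce $\b C\b B(t) \le \b d - \|\b C\|\,L_{\b\Pi}\,e(\b u_d(t))\,\b 1$ pointwise. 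The right-hand side is $\b u_d$-dependent, hence not constant; the fix is to upper-bound $e(\b u_d(t))$ by a constant over the curve. Since $e$ is $L_e$-Lipschitz and $\b u_d$ is a continuous function of $\b x_d$ (via \eqref{eqn:fl_u}, using Assumption~\ref{ass:reldeg} and local Lipschitzness of $\b f_d,\b g_d$), one bounds $\sup_{t} e(\b u_d(t))$ in terms of $\sup_t \|\b u_d(t)\|$, which is itself controlled once the input constraint below is imposed. This gives a constant margin $\bar e$, and then Property~\ref{prop:linBound} converts $\b C\BEZ_j \le \b d - \|\b C\|L_{\b\Pi}\bar e\,\b 1$ for all $j$ into the desired pointwise state bound. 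Writing $\BEZ = \b P\,[\,\b I,\b H,\ldots,\b H^{\gamma-1}]^\top$ and vectorizing via $\vec\BEZ = \vec{\b H}\,\vec{\b p}$, these become linear inequalities $\b F_{\mathcal{X}}\vec{\b p} \le \b G_{\mathcal{X}}$.

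Next I would handle the input constraint $\b k(\b x_{\rm cl}(t),\b x_d(t),\b u_d(t))\in\mathcal{C}_\mathcal{U}$. Here I expect the certificate $\mathcal{E}$ to be constructed precisely so that $\|\b k(\b x_{\rm cl},\b x_d,\b u_d)\| \le$ (some nominal term) $+ $ (a term depending on $\b u_d$) whenever $\b x_{\rm cl}\in\b\Psi(\b x_d)\oplus\mathcal{E}(\b u_d)$ — this is the standard way ISS-style tracking certificates are used (cf.\ the $e(\b u_d)$ formulation and the surrounding discussion). So $\|\b k\|_\infty \le u_{\max}$ reduces to a bound of the form $\alpha\|\b u_d(t)\|_\infty + \beta \le u_{\max}$ (possibly also involving $\b x_d(t)$ through bounded terms on the compact constraint set). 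Now $\b u_d(t) = \b g_d(\b x_d(t))^{-1}(\b q_d^{(\gamma)}(t) - \b f_d(\b x_d(t)))$, and $\b q_d^{(\gamma)}(t) = \b b^{(\gamma)}(t) = \b p\,\b H^{\gamma}\b z(t)$, which is again a B\'ezier curve in $t$ with control points $\b p\,\b H^{\gamma}$, so $\b q_d^{(\gamma)}(t)\in\mathrm{conv}(\{(\b p\b H^{\gamma})_j\})$. The nonlinear pieces $\b g_d^{-1}$ and $\b f_d$ spoil exact linearity, so I would Lipschitz-bound them over $\mathcal{C}_\mathcal{X}$: there are constants such that $\|\b u_d(t)\|_\infty \le L_1\|\b q_d^{(\gamma)}(t)\|_\infty + L_2$, and then $\|\b q_d^{(\gamma)}(t)\|_\infty \le \max_j \|(\b p\b H^{\gamma})_j\|_\infty$ by the convex hull property. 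Bounding the right side by a constant and chaining the inequalities yields linear constraints $\b F_{\mathcal{U}}\vec{\b p}\le\b G_{\mathcal{U}}$ on the (vectorized) control points; the boundary-condition equalities from Property~\ref{prop:disc_to_bez} can be appended as two-sided inequalities if desired. Stacking, $\b F = [\b F_{\mathcal{X}}^\top\ \b F_{\mathcal{U}}^\top]^\top$ and $\b G = [\b G_{\mathcal{X}}^\top\ \b G_{\mathcal{U}}^\top]^\top$.

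\textbf{Main obstacle.} The circularity between the state margin and the input bound is the delicate point: the state constraint needs a uniform bound $\bar e$ on $e(\b u_d(t))$, which needs a uniform bound on $\|\b u_d(t)\|$, which is exactly what the input constraint enforces — but the input constraint in turn may reference $\b x_d$ on the compact set $\mathcal{C}_\mathcal{X}$. I would resolve this by first fixing, once and for all, a uniform input bound $u_{\max}$ as given, using it to define the constant margin $\bar e := \sup\{e(\b u):\|\b u\|_\infty\le u_{\max}\}$ (finite since $e$ is continuous and the ball is compact), and then showing that the stacked linear constraints are self-consistent: any $\b p$ satisfying them produces a curve whose induced $\b u_d$ does stay in the $u_{\max}$-ball (so the a~priori bound used to define $\bar e$ is justified a~posteriori). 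A secondary nuisance is that Property~\ref{prop:linBound} as stated bounds $\b C\b B(t)$ pointwise but I also need to bound norms like $\|\b q_d^{(\gamma)}(t)\|_\infty$ and handle the $\|\b C\|$ row-norm bookkeeping cleanly; these are routine once the convex-hull property is invoked coordinate-wise, but they must be done carefully to keep every bound linear in $\vec{\b p}$.
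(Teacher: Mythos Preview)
Your outline would produce valid $\b F,\b G$, but it departs from the paper in how the $\b u_d$-dependence is handled, and that difference is exactly what manufactures the circularity you flag. The paper never collapses $e(\b u_d)$ to a fixed constant $\bar e$. Instead, Lemmas~\ref{lem:input_constraint} and~\ref{lem:state_constraint} leave both the input and state constraints as joint affine inequalities in $(\b x_d,\|\b x_d-\bar{\b x}_d\|,\|\b u_d\|)$ with $\|\b u_d\|$ a live variable; the heavy lifting is Lemma~\ref{lem:const}, which takes any such inequality, Lipschitz-expands $\b g_d^{-1}$ and $\b f_d$ about a fixed reference point $\bar{\b x}_d$ to obtain a quadratic form in $(\|\b x_d-\bar{\b x}_d\|,\|\b q_d^{(\gamma)}-\b f_d(\bar{\b x}_d)\|)$, inner-approximates that quadratic by a linear sublevel set via a small closed-form optimization, and then unrolls the $\infty$-norms into sign-permuted affine inequalities on $[\b x_d;\b q_d^{(\gamma)}]$. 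Property~\ref{prop:linBound} is applied only at the very end. Because the state and input constraints are carried to the linear stage together, there is no a priori/a posteriori consistency loop to close; your decoupled route works but is strictly more conservative, since it takes the worst-case $e$ over an entire ball rather than letting the polytope tighten $\b u_d$ and the state margin jointly. The reference point $\bar{\b x}_d$ you omit is also what the paper later refines into a piecewise-constant trajectory in Corollary~\ref{coll:ref} to reduce conservatism. One small slip to fix in your version: $u_{\max}$ bounds the tracker input $\b k$, not the planner input $\b u_d$, so $\bar e:=\sup\{e(\b u):\|\b u\|\le u_{\max}\}$ uses the wrong radius; you would first need to extract the implied bound on $\|\b u_d\|$ from the input-constraint inequality and sup over that ball instead.
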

Towards this goal, we first show that satisfying input constraints of the tracker can be reformulated as a linear constraint on state and input norms:
\begin{lemma}
    \label{lem:input_constraint}
    Given a reference points $\bar{\b x}_d \in \mathcal{X}_d$, enforcing the constraint:
    \begin{align*}
    % \label{eqn:input_constraint}
    \begin{bmatrix} L_{\b k}(1+L_{\b\Psi}) & L_k(1+L_{e}) \end{bmatrix}\begin{bmatrix} \|\b x_d - \bar{\b x}_d\| \\ \|\b u_d\|\end{bmatrix} \le u_{\max} - K(\bar{\b x}_d),
\end{align*}
with $K(\bar{\b x}_d)\triangleq \|{\b k}(\b\Psi(\bar{\b x}_d),\bar{\b x}_d, \b 0)\| + e(\b 0)$ results in input constraints being satisfied, i.e. $\b k(\b x_{\rm{cl}}, \b x_d, \b u_d) \in \mathcal{C}_{\mathcal{U}}$.
\end{lemma}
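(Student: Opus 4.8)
The plan is to bound the tracker output $\b k(\b x_{\rm cl}(t), \b x_d(t), \b u_d(t))$ in the $\infty$-norm, pointwise in $t$, by a quantity that depends only on the distances $\|\b x_d(t) - \bar{\b x}_d\|$ and $\|\b u_d(t)\|$ to the anchor triple $(\b \Psi(\bar{\b x}_d), \bar{\b x}_d, \b 0)$, and then to recognize that the hypothesized inequality says exactly that this bound never exceeds $u_{\max}$. First I would use the tracking certificate of Definition~\ref{def:invar}: at each $t$ there is $\b e(t) \in \mathcal{E}(\b u_d(t))$ with $\b x_{\rm cl}(t) = \b \Psi(\b x_d(t)) + \b e(t)$, and by \eqref{eqn:tracking_ub} we have $\|\b e(t)\| \le e(\b u_d(t))$. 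Adding and subtracting $\b k(\b \Psi(\bar{\b x}_d), \bar{\b x}_d, \b 0)$ and applying the triangle inequality splits $\|\b k(\b x_{\rm cl}, \b x_d, \b u_d)\|$ into the constant $\|\b k(\b \Psi(\bar{\b x}_d), \bar{\b x}_d, \b 0)\|$ plus the increment $\|\b k(\b x_{\rm cl}, \b x_d, \b u_d) - \b k(\b \Psi(\bar{\b x}_d), \bar{\b x}_d, \b 0)\|$.

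For the increment I would invoke Lipschitz continuity of $\b k$ with constant $L_{\b k}$, taken in the product $\infty$-norm on $\mathcal{X}\times\mathcal{X}_d\times\mathcal{U}_d$, so that
\[
\|\b k(\b x_{\rm cl}, \b x_d, \b u_d) - \b k(\b \Psi(\bar{\b x}_d), \bar{\b x}_d, \b 0)\| \le L_{\b k}\big(\|\b x_{\rm cl} - \b \Psi(\bar{\b x}_d)\| + \|\b x_d - \bar{\b x}_d\| + \|\b u_d\|\big).
\]
Next I would peel the term $\|\b x_{\rm cl} - \b \Psi(\bar{\b x}_d)\|$ through $\b \Psi(\b x_d)$: using $\b x_{\rm cl} = \b \Psi(\b x_d) + \b e$ and the Lipschitz bound on $\b \Psi$, it is at most $\|\b e\| + L_{\b \Psi}\|\b x_d - \bar{\b x}_d\| \le e(\b u_d) + L_{\b \Psi}\|\b x_d - \bar{\b x}_d\|$, and finally $e(\b u_d) \le e(\b 0) + L_{e}\|\b u_d\|$ by the Lipschitz bound on $e$. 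Collecting terms, the coefficient of $\|\b x_d - \bar{\b x}_d\|$ becomes $L_{\b k}(1 + L_{\b \Psi})$, the coefficient of $\|\b u_d\|$ becomes $L_{\b k}(1 + L_{e})$, and the residual constant collapses into $\|\b k(\b \Psi(\bar{\b x}_d), \bar{\b x}_d, \b 0)\| + e(\b 0) = K(\bar{\b x}_d)$; requiring the resulting sum to be $\le u_{\max}$ is precisely the stated matrix inequality, and since this controls the $\infty$-norm it certifies $\b k(\b x_{\rm cl}, \b x_d, \b u_d) \in \mathcal{C}_{\mathcal{U}}$.

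The main obstacle is careful bookkeeping of the Lipschitz constants rather than any single hard estimate: one must fix one consistent norm on the domain of $\b k$ so that $L_{\b k}$ simultaneously governs all three argument slots (this is what produces the $1 + L_{\b \Psi}$ and $1 + L_{e}$ factors from the two nested triangle inequalities), confirm that the certificate inclusion legitimately lets the unknown $\b x_{\rm cl}$ be replaced by the known $\b \Psi(\b x_d)$ up to $e(\b u_d)$, and check that the $e(\b 0)$ contribution enters $K(\bar{\b x}_d)$ with the intended coefficient (handled by the natural convention on the state-slot Lipschitz constant of $\b k$, or by absorbing the stray factor into $L_{\b k}$). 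Once these points are pinned down the argument is just iterated triangle inequality together with monotonicity of the bound in each nonnegative quantity, so the estimate holds uniformly over $t \in I$ for any fixed reference $\bar{\b x}_d$.
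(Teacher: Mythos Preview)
Your proposal is correct and follows essentially the same route as the paper: add and subtract $\b k(\b\Psi(\bar{\b x}_d),\bar{\b x}_d,\b 0)$, apply the Lipschitz bound on $\b k$, peel $\|\b x_{\rm cl}-\b\Psi(\bar{\b x}_d)\|$ through $\b\Psi(\b x_d)$ using the tracking certificate and the Lipschitz constants of $\b\Psi$ and $e$, and collect coefficients. Your remark about the coefficient on $e(\b 0)$ is apt---the paper's own computation drops the $L_{\b k}$ factor there in exactly the way you flag, so your caveat about absorbing it into the convention for $L_{\b k}$ is the right way to reconcile the bookkeeping.
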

\begin{proof}
    Observe that the input $\b k$ can be bounded by:
    \begin{align*}
        \|\b k(\b x, \b x_d, \b u_d)\| &\le L_{\b k}(\|\b x - \b \Psi(\b x_d)\| + \|\b \Psi(\b x_d) - \b \Psi(\bar{\b x}_d)\| \\
                     & + \|\b u_d\| + \|\b x_d - \bar{\b x}_d\|) + \|\b k(\b \Psi(\bar{\b x}_d), \bar{\b x}_d, \b 0)\| \\
                     &\le L_k(1+L_{e})\|\b u_d\| + L_{\b k}(1+L_{\b\Psi})\|\b x_d - \bar{\b x}_d\| \\
                     &\hspace{10mm} + \|{\b k}(\b\Psi(\bar{\b x}_d),\bar{\b x}_d, \b 0)\| + e(\b 0).
    \end{align*}
    Rearranging terms yields the desired result.
\end{proof}
We will show in Lemma~\ref{lem:const} that this constraint can be reformulated as a linear inequality constraint on the B\'ezier curve $\b x_d(\cdot)$.
Note that in order for this set to be nonempty, we must have that $u_{\max} - e(\b 0) > 0$. 
This requirement is one of feasibility of the tracking controller -- if not, then the feedback controller applied over the tracking certificate $\mathcal{E}$ is larger than the set $\mathcal{U}$ meaning regardless of desired trajectory there could exist a perturbed state which would violate the input constraints. If this is the case, then the error tracking bound of the low-level controller needs to be improved before proceeding.
%
% With the above construction, we have a linear inequality on the desired trajectory, which if satisfied implies input constraint satisfaction for the closed loop control system. 
In order to make a similar claim for state constraints, we present the following claim:
\begin{lemma}
\label{lem:state_constraint}
Enforcing the constraint:
\begin{align}
    \label{eqn:state_constraint}
    \begin{bmatrix} \b C & L_{\b\Pi}L_e\b K\end{bmatrix}\begin{bmatrix}  \b x_d \\ \|\b u_d\|\end{bmatrix} \le \b d - L_{\b\Pi}e(\b 0) \b K
\end{align}
with $\b K\triangleq \sqrt{\textup{diag}(\b C \b C^\top)}$ results in state constraints being satisfied, i.e. $\b \Pi(\b x_{\rm cl}(t)) \in \mathcal{C}_\mathcal{X}$.
\end{lemma}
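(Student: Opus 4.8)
The plan is to turn the tracking certificate into a pointwise bound on how far $\b\Pi(\b x_{\rm cl}(t))$ can stray from the reference $\b x_d(t)$, and then apply that bound one row of $\b C$ at a time. Fix $t \in I$. By Definition~\ref{def:invar} there is some $\b e \in \mathcal{E}(\b u_d(t))$ with $\b x_{\rm cl}(t) = \b\Psi(\b x_d(t)) + \b e$, and by \eqref{eqn:tracking_ub} we have $\|\b e\| \le e(\b u_d(t))$. Applying $\b\Pi$, using the right-inverse identity $\b\Pi(\b\Psi(\b x_d(t))) = \b x_d(t)$, and invoking the $L_{\b\Pi}$-Lipschitz continuity of $\b\Pi$ gives
\[
 \|\b\Pi(\b x_{\rm cl}(t)) - \b x_d(t)\| \;\le\; L_{\b\Pi}\|\b e\| \;\le\; L_{\b\Pi}\,e(\b u_d(t)) \;\le\; L_{\b\Pi}\big(e(\b 0) + L_e\|\b u_d(t)\|\big),
\]
where the last step uses the $L_e$-Lipschitz continuity of $e$ about $\b u_d = \b 0$.

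Write $\bs\delta(t) \triangleq \b\Pi(\b x_{\rm cl}(t)) - \b x_d(t)$. For the $i$-th row $\b c_i$ of $\b C$ with corresponding entry $d_i$ of $\b d$, Cauchy--Schwarz gives $\b c_i\bs\delta(t) \le \|\b c_i\|_2\,\|\bs\delta(t)\|$, and $\|\b c_i\|_2 = \sqrt{(\b C\b C^\top)_{ii}}$ is exactly the $i$-th entry $\b K_i$ of $\b K$. Hence
\[
 \b c_i\b\Pi(\b x_{\rm cl}(t)) \;=\; \b c_i\b x_d(t) + \b c_i\bs\delta(t) \;\le\; \b c_i\b x_d(t) + \b K_i L_{\b\Pi} e(\b 0) + \b K_i L_{\b\Pi} L_e\|\b u_d(t)\|.
\]
Collecting these inequalities over all rows $i$, a sufficient condition for $\b C\b\Pi(\b x_{\rm cl}(t)) \le \b d$ is $\b C\b x_d(t) + L_{\b\Pi}L_e\b K\|\b u_d(t)\| \le \b d - L_{\b\Pi}e(\b 0)\b K$, which is precisely \eqref{eqn:state_constraint} evaluated at time $t$. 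Since $t\in I$ was arbitrary, enforcing \eqref{eqn:state_constraint} along the trajectory yields $\b\Pi(\b x_{\rm cl}(t))\in\mathcal{C}_\mathcal{X}$ for all $t\in I$.

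I do not expect a genuine obstacle here: the statement is a chain of Lipschitz estimates stitched together by the right-inverse property. The only points demanding care are the norm bookkeeping --- making explicit that the certificate upper bound $e$ and the constant $L_{\b\Pi}$ are taken in the Euclidean norm so that the dual-norm (Cauchy--Schwarz) step produces exactly the vector of row norms $\b K = \sqrt{\mathrm{diag}(\b C\b C^\top)}$, and confirming the sign when moving the $L_{\b\Pi}e(\b 0)\b K$ term to the right-hand side. I would also note that \eqref{eqn:state_constraint} is still a pointwise-in-$t$ condition on the pair $(\b x_d(t),\|\b u_d(t)\|)$; converting it into a finite linear inequality on the B\'ezier control points --- via Property~\ref{prop:linBound} for the term $\b C\b x_d$ together with a uniform bound on $\|\b u_d(t)\|$ coming from \eqref{eqn:fl_u} --- is what Lemma~\ref{lem:const} will handle.
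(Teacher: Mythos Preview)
Your proof is correct and follows the same route as the paper's: bound $\|\b\Pi(\b x_{\rm cl})-\b x_d\|$ via the tracking certificate, the right-inverse identity, and Lipschitz continuity of $\b\Pi$ and $e$, then tighten each half-space of $\mathcal{C}_\mathcal{X}$ by the row norm $\b K_i$ (the paper outsources this last step to Lemma~4 of \cite{csomay2022multirate_extended}, whereas you carry out the Cauchy--Schwarz argument explicitly). Your caveat about norm bookkeeping is well placed --- the paper declares $\|\cdot\|=\|\cdot\|_\infty$ yet the factor $\sqrt{\mathrm{diag}(\b C\b C^\top)}$ arises from the $2$-norm dual pairing, so the same ambiguity is present in the original.
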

\begin{proof}
    Recall that applying the controller $\b k$ yields:
% \begin{align*}
%     \b \Pi(\b x_{\rm cl}) \in \b\Pi\Big(\b \Psi(\b x_d(t)) \oplus \mathcal{E}(\b u_d(t))\Big),
% \end{align*}
\begin{align*}
    \b \Pi(\b x_{\rm cl}) \in \b \Omega\triangleq\Big\{\b\Pi(\b\Psi(\b x_d) + \b v)~|~\b v \in \mathcal{E}(\b u_d)\Big\},
\end{align*}
which holds from Definition~\ref{def:invar}. From \eqref{eqn:tracking_ub}, we continue:
\begin{align*}
    \b\Omega &\subset \Big\{\b\Pi(\b\Psi(\b x_d) + \b v)~|~ \|\b v\| \le e(\b u_d)\Big\} \\
    &\equiv \Big\{\b x_d + \b y~|~\b y = \b\Pi(\b\Psi(\b x_d) + \b v)-\b x_d,\|\b v\| \le e(\b u_d)\Big\} \\
    &\subset \Big\{\b x_d + \b y ~|~ \|\b y\| \le \|\b\Pi(\b\Psi(\b x_d) + \b v)-\b x_d\|, \|\b v\| \le e(\b u_d)\Big\}.
    \end{align*}
    Next, recalling that $\b \Pi \circ \b \Psi(\b x_d) = \b x_d$, we have:
    \begin{align*}
    \b\Omega &\subset \Big\{\b x_d + \b y ~|~ \|\b y\| \le L_{\b\Pi}\|\b v\|, \|\b v\| \le e(\b u_d)\Big\} \\
    &\subset \Big\{\b x_d + \b y ~|~ \|\b y\| \le L_{\b\Pi} e(\b u_d)\Big\}
\end{align*}
Therefore, defining $\rho \triangleq L_{\b\Pi}e(\b u_d)$, we have:
\begin{align*}
    \b \Pi(\b x_{\rm cl}) \in \b x_d(t) \oplus B_{\rho}(0),
\end{align*}
As such, if we can ensure $\b C(\b x_d + \b v) \le \b d$ for all $\b v \in B_{\rho}(0)$, we would have the desired result. Appealing to Lemma 4 in \cite{csomay2022multirate_extended}, we know that this is satisfied if:
    \begin{align*}
        \b C \b x_d &\le \b d - L_{\b\Pi}e(\b u_d) \sqrt{\text{diag}(\b C \b C^\top)},
    \end{align*}
    which is in turn satisfied if:
    \begin{align*}
        \b C \b x_d &\le \b d - \Big(L_{\b\Pi}L_e\|\b u_d\|+L_{\b\Pi}\b e(\b 0)\Big) \sqrt{\text{diag}(\b C \b C^\top)},
    \end{align*}
    which can be rearranged to achieve the desired result.
\end{proof}

Now, we state the following Lemma, which will allow us to reformulate these state and input constraints via linear constraints on the B\'ezier curve:
\begin{lemma}
\label{lem:const}
Given a reference point $\bar{\b x}_d\in\mathcal{X}_d$, a matrix $\b A\in \R^{k\times n+2}$ and vector $\b b\in\R^k$, there exists a matrix $\b L \in \R^{4knm\times n}$ and a vector $\b h\in \R^{4knm}$ such that:
\begin{align*}
\b L\begin{bmatrix}\b x_d\\\b q_d^{(\gamma)}\end{bmatrix}  \le \b h \implies \b A\begin{bmatrix}\b x_d \\ \|\b x_d - \bar{\b x}_d\| \\ \|\b u_d\|\end{bmatrix} \le \b b.
\end{align*}
\end{lemma}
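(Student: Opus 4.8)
The plan is to convert the target inequality $\b A\,[\b x_d^\top,\ \|\b x_d-\bar{\b x}_d\|,\ \|\b u_d\|]^\top \le \b b$ into a finite conjunction of genuinely linear inequalities in the variable $[\b x_d^\top,(\b q_d^{(\gamma)})^\top]^\top$, exploiting two facts: the $\infty$-norm of a vector is the pointwise maximum of $2n$ (resp.\ $2m$) affine functionals of its argument, and on a compact set the nonlinear quantity $\|\b u_d\|$ admits a majorant that is affine in $\b q_d^{(\gamma)}$.

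First I would substitute $\b u_d = \b g_d(\b x_d)^{-1}\big(\b q_d^{(\gamma)} - \b f_d(\b x_d)\big)$ and bound $\|\b u_d\| \le \bar g\big(\|\b q_d^{(\gamma)}\| + \bar f\big)$, where $\bar g \triangleq \sup_{\b x_d\in\mathcal{C}_\mathcal{X}}\|\b g_d(\b x_d)^{-1}\|$ and $\bar f \triangleq \sup_{\b x_d\in\mathcal{C}_\mathcal{X}}\|\b f_d(\b x_d)\|$ are finite since $\b g_d$ is continuous and invertible on the compact set $\mathcal{C}_\mathcal{X}$ (Assumption~\ref{ass:reldeg}) and $\b f_d$ is continuous. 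Partitioning $\b A = [\,\b A_1 \mid \b a_2 \mid \b a_3\,]$ with $\b A_1\in\R^{k\times n}$ and $\b a_2,\b a_3\in\R^k$, and replacing each scalar $a_{2,i}, a_{3,i}$ by its nonnegative part $a_{2,i}^+ \triangleq \max\{a_{2,i},0\}$, $a_{3,i}^+ \triangleq \max\{a_{3,i},0\}$ — which only increases the left side since $\|\b x_d-\bar{\b x}_d\|\ge 0$ and $\|\b u_d\|\ge 0$ — it suffices to enforce, for every row $i$, the inequality $\b A_{1,i}\b x_d + a_{2,i}^+\|\b x_d-\bar{\b x}_d\| + a_{3,i}^+\bar g\|\b q_d^{(\gamma)}\| + a_{3,i}^+\bar g\bar f \le b_i$.

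Next I would expand the two norms, $\|\b x_d-\bar{\b x}_d\| = \max\{\,s'(x_{d,i'}-\bar x_{d,i'}) : i'\le n,\ s'\in\{\pm1\}\,\}$ and $\|\b q_d^{(\gamma)}\| = \max\{\,s\,q_{d,j}^{(\gamma)} : j\le m,\ s\in\{\pm1\}\,\}$. Since multiplication by the nonnegative scalars $a_{2,i}^+$ and $a_{3,i}^+\bar g$ and addition of constants commute with $\max$, the left-hand side above equals a maximum, over the $2n\cdot 2m$ index tuples $(i',s',j,s)$, of affine functions of $[\b x_d^\top,(\b q_d^{(\gamma)})^\top]^\top$. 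Requiring each of these affine functions to be $\le b_i$ — equivalently, collecting the resulting $k\cdot 2n\cdot 2m = 4knm$ inequalities into $\b L\,[\b x_d^\top,(\b q_d^{(\gamma)})^\top]^\top \le \b h$, with the constants $b_i + a_{2,i}^+ s'\bar x_{d,i'} - a_{3,i}^+\bar g\bar f$ absorbed into $\b h$ — forces the maximum, and hence the $i$-th entry of $\b A\,[\b x_d^\top,\|\b x_d-\bar{\b x}_d\|,\|\b u_d\|]^\top$, to be below $b_i$ for every $i$; this is exactly the asserted implication, and the $4knm$ row count matches the stated size of $\b L$.

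The only genuine obstacle is that $\|\b u_d\|$ is a nonlinear function of the state, so no exact linear reformulation exists; the resolution — and the source of conservatism — is the uniform majorization of $\b g_d^{-1}$ and $\b f_d$ over the compact set $\mathcal{C}_\mathcal{X}$, the region in which the B\'ezier curve is already confined once the state-constraint rows of Lemma~\ref{lem:state_constraint} have been included in $\b A$. Everything else is bookkeeping: splitting each $\infty$-norm into its $2n$, resp.\ $2m$, affine pieces and verifying that the sign-splitting via $a_{2,i}^+,a_{3,i}^+$ keeps each inequality on the upper side of the $\max$.
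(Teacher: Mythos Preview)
Your argument is correct and reaches the stated $4knm$ row count, but it takes a genuinely different route from the paper. You majorize $\|\b u_d\|$ by the \emph{global} bound $\bar g\bigl(\|\b q_d^{(\gamma)}\|+\bar f\bigr)$, with $\bar g,\bar f$ the suprema of $\|\b g_d^{-1}\|$ and $\|\b f_d\|$ over $\mathcal{C}_\mathcal{X}$; this is already affine in $\b q_d^{(\gamma)}$, so after the positive-part trick you can sign-split the two $\infty$-norms directly. The paper instead uses \emph{local} Lipschitz estimates centered at the reference point, $\|\b g_d^{-1}(\b x_d)\|\le L_{\mathcal G}\|\b x_d-\bar{\b x}_d\|+\|\b g_d^{-1}(\bar{\b x}_d)\|$ and analogously for $\b f_d$; the product of these two affine-in-$\bm\sigma_d$ bounds yields a quadratic form in $\bm\sigma_d=[\|\b x_d-\bar{\b x}_d\|,\|\b q_d^{(\gamma)}-\b f_d(\bar{\b x}_d)\|]^\top$, which the paper then inner-approximates by a halfspace via a PSD projection and a small auxiliary optimization before doing the same $4nm$ sign-splitting you perform. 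Your path is shorter and avoids the quadratic detour entirely; the price is that the reference point $\bar{\b x}_d$ plays no role whatsoever in your bound on $\|\b u_d\|$, so the resulting polytope cannot tighten as $\bar{\b x}_d$ is chosen closer to the trajectory. That locality is precisely what the paper exploits in Corollary~\ref{coll:ref}, where a $k$-refinement with reference points $\{\bar{\b x}_i\}$ shrinks the bound segment-by-segment; under your construction those refinement rows would be identical regardless of the $\bar{\b x}_i$, and the corollary would lose its content on the input-constraint side.
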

\begin{proof}
    We begin by bounding the term $\b u_d(\cdot)$:
    \begin{align}
        \label{eqn:ud}
        \|\b u_d\| \le \|\mb{g}_d(\b x_d)^{-1}\|\|\mb{q}_d^{(\gamma)}-\mb{f}_d(\b x_d)\|.
    \end{align}
    Taking $\bar{\b x}_d\in\mathcal{X}_d$ to be a reference point in the planning state space, we can bound the first term by:
    \begin{align}
        \label{eqn:g}
        \|\b g^{-1}(\b x_d)\| 
        & \le L_\Lieg\|\b x_d - \bar{\b x}_d\| + \|\b g^{-1}(\bar{\b x}_d)\|,
    \end{align}
    where $L_\Lieg$ is a Lipschitz constant of $\b g^{-1}$ with respect to the $\infty$-norm on $\mathcal{C}_\mathcal{X}$, which is well defined by the local Lipschitz continuity and nonzero assumptions on $\b g$ and the compactness of $\mathcal{C}_\mathcal{X}$.
    Similarly:
    \begin{align}
        \label{eqn:f}
        \|\b q_d^{(\gamma)} - \b f(\x_d)\| \le L_{\b f}\|&\b x_d - \bar{\b x}_d\| + \|\b q_d^{(\gamma)} - \b f(\bar{\b x}_d)\|.
    \end{align}
    Now, let $\b a\triangleq\begin{bmatrix} \b a_1& a_2 & a_3\end{bmatrix}$ be a row of the constraint matrix $\b A$ with $\b a_1\in\R^n$ and $ a_2,a_3\in\R$ and $b\in \R$ the corresponding entry of the vector $\b b$. Substituting \eqref{eqn:g} and \eqref{eqn:f} into \eqref{eqn:ud}, we can construct a quadratic form:
    \begin{align*}
       % \b a_2^\top\begin{bmatrix}\|\b x_d - \bar{\b x}_d\| \\ \|\b u_d\|\end{bmatrix}\le\frac{1}{2}\bm\sigma^\top {\mb M} \bm\sigma + \mb N^\top \bm\sigma,
       \begin{bmatrix}a_2 & a_3\end{bmatrix}\begin{bmatrix}\|\b x_d - \bar{\b x}_d\| \\ \|\b u_d\|\end{bmatrix}\le\bm\sigma^\top \b M \bm\sigma_d + \mb N^\top \bm\sigma_d,
    \end{align*}
    where $\bm\sigma_d \triangleq \begin{bmatrix}\|\b x_d - \bar{\b x}_d\| & \| \b q_d^{(\gamma)} - \b f(\bar \x_d)\|\end{bmatrix}^\top$ and:
    \begin{align*}
        \b M = \frac{a_3}{2}\begin{bmatrix}2L_\Lieg L_{\b f} & L_\Lieg \\ L_\Lieg & 0\end{bmatrix},~~
        \b N = \begin{bmatrix}a_3L_{\b f}\|\b g^{-1}(\bar{\b x}_d)\|+a_2 \\ a_2\|\b g^{-1}(\bar{\b x}_d)\|\end{bmatrix}.
    \end{align*}
    
    Next, consider $\widehat{\mb M}$ as the projection of $\b M$ onto the positive semidefinite cone. With this, we can define the function $h:\mathcal{X}_d\times \R^m\to\R$ as:
    \begin{align*}
        h(\b x_d, \b q_d^{(\gamma)}) = \b \sigma_d^\top \widehat{\b M}\b\sigma_d + \b N^\top \b\sigma_d + \b a_1^\top \b x_d.
    \end{align*}
    Because $\b M$ is symmetric, we have that $\widehat{\b M} \preceq \b M$. As such, points in the set $\b\Omega \triangleq \{\b (\b x_d, \b q_d^{(\gamma)})~|~h(\b x_d, \b q_d^{(\gamma)})\le b\}$ satisfy the desired inequality. 
    Next, consider a function $\ell:\mathcal{X}_d\times \R^m\to\R$ of the form:
    \begin{align*}
        \ell(\b x_d, \b q_d^{(\gamma)}) = \b c^\top \b\sigma_d + \b a_1^\top \b x_d,
    \end{align*}
    for some vector $\b c\in \R^2$, along with the following optimization program:
    \begin{subequations}\label{eqn:levelset}
        \begin{align*}
        \delta^* = \sup_{\delta\in \R} \quad & \delta\\
        \textrm{s.t.} \quad & \ell(\b x_d, \b q_d^{(\gamma)}) \le \delta \implies h(\b x_d, \b q_d^{(\gamma)}) \le b 
        \end{align*}
    \end{subequations}
    In general, this set containment problem may be challenging to solve; however, given the specific problem structure this can be solved for in closed form (the details of which can be found in \cite{code}). Then, we have that the set $\b \Lambda \triangleq \{\b (\b x_d, \b q_d^{(\gamma)})~|~\ell(\b x_d, \b q_d^{(\gamma)})\le \delta^*\}\subset\b\Omega$; therefore points in $\b \Lambda$ satisfy the desired constraints.

    Finally, we will show that there exists a matrix $\b L_i\in\R^{4nm\times n+m}$ and a vector $\mb h_i\in\R^{4nm}$ such that:
   $$\b L_i\begin{bmatrix}\b x_d \\ \b q^{(\gamma)}_d\end{bmatrix} \le \mb h_i \Rightarrow \ell(\b x_d, \b q_d^{(\gamma)})\le \delta^*.$$
    Based on the definition of $\b\sigma_d$, the set $\b \Lambda$ is given by:
    \begin{align*}
        \mb c^\top  \begin{bmatrix}\max_i|\b x_d - \bar{\b x}|_i \\ \max_i \left| \b q^{(\gamma)}_d - \b f(\bar{\b x})\right|_i \end{bmatrix}+ \b a_1^\top \b x_d \le \delta^*,
    \end{align*}
    which, taking $\b c^\top = [c_1, c_2]$, is equivalent to:
    {
    \setlength{\belowdisplayskip}{3pt}
    \begin{align*}
        \underbrace{\begin{bmatrix} c_1 &c_1&-c_1&-c_1 \\ c_2 & -c_2 & c_2 & -c_2\end{bmatrix}^\top}_{\triangleq \b F^\top} \begin{bmatrix}\left(\b x_d - \bar{\b x}\right)_i \\ \left(\b q^{(\gamma)}_d - \b f(\bar{\b x})\right)_j \end{bmatrix}+ \b a_1^\top \b x_d \le \b \delta^*,
    \end{align*}
    }
for all row pairs $i\le n$ and $j \le m$ and where $\b\delta^* \triangleq \delta^*\otimes \mb 1$ with $\otimes$ denoting the Kronecker product. 
%
% \vfill
Letting $\b L_i\in \{0,1\}^{4nm\times n+m}$ be matrices capturing the $i,j$ permutations of the scaling matrix $\b F^\top$ above, we can reformulate this as:
    \begin{align*}
        \begin{bmatrix}\b L_1&\b L_2 \end{bmatrix}\begin{bmatrix}\b x_d - \bar{\b x} \\ \b q^{(\gamma)}_d - \b f(\bar{\b x}) \end{bmatrix}+ (\b a_1^\top\otimes\b1) \b x_d \le\b\delta^* ,
    \end{align*}
    which can be further rearranged as:
    \begin{align*}
    % \label{eqn:Lambda}
        \underbrace{\begin{bmatrix}\b L_1 + \b a_1^\top\otimes \b 1&\b L_2 \end{bmatrix}}_{\triangleq\b L_i} \begin{bmatrix}\b x_d \\ \b q^{(\gamma)}_d\end{bmatrix} \le \underbrace{\b\delta^* + \begin{bmatrix}\b L_1&\b L_2 \end{bmatrix} \begin{bmatrix}\bar{\b x}_d  \\ \b f(\bar{\b x}_d)\end{bmatrix}}_{\triangleq \mb h_i}.
    \end{align*}
    Repeating this process for each of the $k$ rows of the constraint matrix $\b A$ yields the desired result.
    \end{proof}
% \newpage
%
The previous Lemma demonstrates that the inequalities on the desired trajectory $\b x_d(\cdot)$ imposed by state and input constraints can be framed as affine constraints on the space of possible trajectories. 
As curves are infinite dimensional objects, traditional trajectory optimizers would generally only approximately enforce these constraints. This is precisely where we see the usefulness of B\'ezier curves -- we can exactly enforce these constraints on the continuous-time curve by reasoning about a discrete, low-dimensional collection of B\'ezier control points (as captured by Property \ref{prop:linBound}).
With this in mind, we are now equipped to prove the main statement of the section:
\begin{proof}[(Proof of Theorem \ref{thm:BezPoly})]
	Enforcing the constraint in Lemma \ref{lem:input_constraint} will result in $\|\b k(\mb{x}_{\rm cl},\b x_d, \b u_d)\|_\infty \le u_{\text{max}}$. Furthermore, from Lemma \ref{lem:state_constraint}, we know that enforcing \eqref{eqn:state_constraint} results in $\b \Pi(\b x_{\rm cl}(t)) \in\mathcal{C}_\mathcal{X}$. Combining these state and input constraints and leveraging Lemma~\ref{lem:const} to produce matrices $\b L_{\b x}, \b L_{\b u}$ and vectors $\b h_{\b x}, \b h_{\b u}$ results in:
    \begin{align}
    \label{eqn:Lambda_h}
	    % \begin{bmatrix}\b L_{\b u}\left(\begin{bmatrix}\BEZ \\ \Bez\b H^\gamma \end{bmatrix}\b z(t)\right) \\ \b C(\BEZ \b z(t)) \\ \b L_{\b x} \Bez\b H^\gamma \b z(t)\end{bmatrix}\le \begin{bmatrix}\b h_{\b u} \\ \b k \\ \b h_{\b x}\end{bmatrix}.
        \begin{bmatrix} \b L_{\b u} \\ \b L_{\b x}\end{bmatrix}\begin{bmatrix}\b x_d \\ \b q_d^{(\gamma)} \end{bmatrix}\le \begin{bmatrix}\b h_{\b u} \\ \b h_{\b x}\end{bmatrix}.
\end{align}
% Therefore, all that remains is reformulating these inequalities from constraints on $\b x_d$ to constraints on the control points $\b P$.
	Based on Property \ref{prop:linBound}, we know that if we enforce this constraint on the control points, it will be enforced for the continuous time curve. Therefore, instead we must enforce:
    \begin{align*}
	    % \begin{bmatrix}\b L\begin{bmatrix}(\BEZ)_j \\ (\Bez\b H^\gamma)_j \end{bmatrix} \\ \b C\BEZ_j\end{bmatrix}\le \begin{bmatrix}\b h \\ \b k\end{bmatrix},
        \begin{bmatrix} \b L_{\b u} \\ \b L_{\b x}\end{bmatrix}\begin{bmatrix}(\BEZ)_j \\ (\Bez\b H^\gamma)_j \end{bmatrix} \le \begin{bmatrix}\b h_{\b u} \\ \b h_{\b x}\end{bmatrix}.
\end{align*}
for $j=0,\ldots,p$. As this imposes linear constraints on the columns of $\Bez$, this can be vectorized and written as:
	$$\b F \vec\Bez \le \b G,$$ where $\b F$ and $\b G$ are appropriate reformulations of \eqref{eqn:Lambda_h} to account for the vectorization. Enforcing this constraint results in state and input constraint satisfaction as desired.
\end{proof}
\label{sec:const}

\section{B\'ezier Reachable Polytopes}
Given the constructions in Section~\ref{sec:const}, there exists an affine inequality that guarantees the existence of a B\'ezier polynomial which results in the closed-loop planner-tracker system satisfying state and input constraints. 
The matrix $\b F$ and vector $\b G$ represent an efficient oracle to check whether B\'ezier curves connecting initial and terminal points satisfy these constraints. 
Combining this affine constraint with Property~\ref{prop:disc_to_bez} allows us to place constraints on the desired boundary conditions of the B\'ezier polynomial -- that is, given an initial condition $\b x_0$, the set characterized by:
\begin{align*}
    \mathcal{F}(\b x_0) = \{\b x_d\in\mathcal{X}_d~|~ \b F \vec{\b{D}}^\dagger \begin{bmatrix} \b x_0^\top & \b x_T^\top \end{bmatrix}^\top \le \b G\},
\end{align*}
represents all terminal conditions for which there exists a feasible B\'ezier polynomial. As such, the set $\mathcal{F}(\b x_0)$ can be thought of as the forward reachable set of the point $\b x_0$. Similarly, given a terminal condition $\b x_T$, the backward reachable set is characterized by:
\begin{align*}
    \mathcal{B}(\b x_T) = \{\b x_0 \in \mathcal{X}_d~|~ \b F \vec{\b{D}}^\dagger \begin{bmatrix} \b x_0^\top & \b x_T^\top \end{bmatrix}^\top \le \b G\}.
\end{align*}
A depiction of the forward reachable set for a pendulum system and a variety of system parameters can be seen in Figure~\ref{fig:gaitTile}. As the error tracking tube $\mathcal{E}$ varies in its dependence on $\b u_d$, the reachable sets change shape to ensure that closed loop system still satisfies the desired constraints.

% \subsection{Reducing Conservatism}

\begin{figure}[t!]
    \centering
    \includegraphics[width=\linewidth]{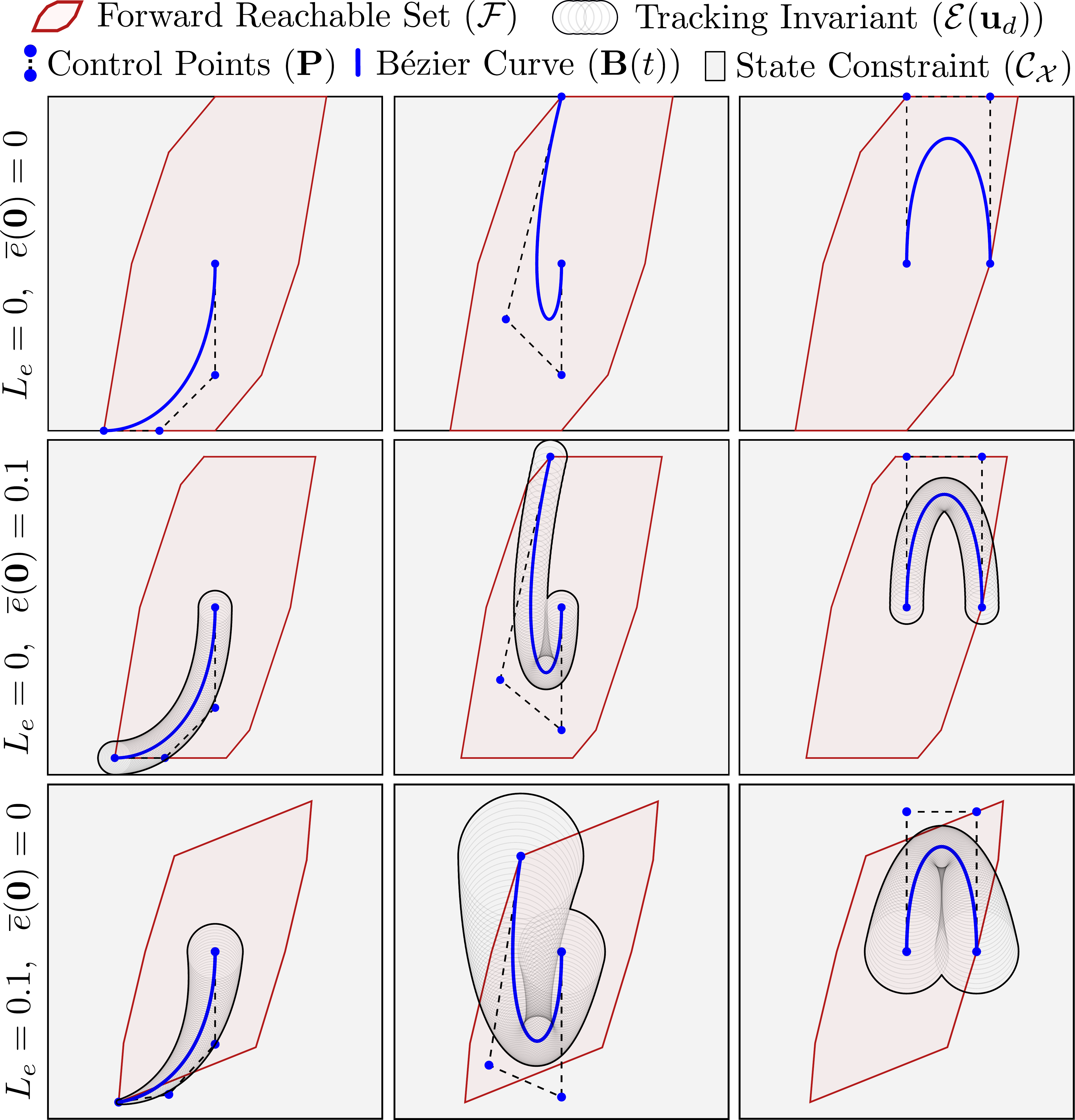}
    \vspace{-4mm}
    \caption{A selection of B\'ezier curves and forward reachable sets. The top row depicts curves with exact tracking, the middle row with a fixed size tracking certificate, and the bottom row with a tracking certificate whose upper bound scales linearly with the planning input $\b u_d$.}
    \vspace{-4mm}
    \label{fig:gaitTile}
\end{figure}

\subsection{Reducing Conservatism}

In the previous discussion, we used a reference point $\bar{\b x}_d$ and bounded the deviation of a trajectory from this point. While this enables tractability, it creates conservatism in the bound as the same reference point was used over the entire trajectory $\b x_d(\cdot)$. To resolve this conservatism, we would like to instead bound the trajectory with a collection of reference points $\{\bar{\b x}_k\}$ spread out over the time interval $[0,T]$. Towards this goal, we leverage the notion of a $k$-refinement of the interval $[0,T]$ from Definition~\ref{def:ref} as well as reference points $\{\bar{\b x}_i\}$ for $i=1,\ldots k$ With these, we can construct a piecewise constant reference trajectory $\bar{\b x}(t) = \bar {\b x}_i$ for $t\in [T_{i-1},T_i)$ with $i=1,\ldots, k$. With this reference trajectory, we have the following:

\begin{corollary}
\label{coll:ref}
	Let system $\Sigma_d$ be a planning model for a system $\Sigma$ with tracking certificate $\mathcal{E}$, and consider a piecewise-constant trajectory $\bar{\b x}(t)$ defined with respect to a $k-$refinement of the interval $[0,T]$. There exist matrices $\widehat{\b F}$ and $\widehat{\b G}$ such that any B\'ezier curve $\b B:I\to \mathcal{X}_d$ with control points $\b p$ satisfying:
    \begin{align}
        \label{eqn:refinedConst}
        \widehat{\b F} \vec{\b p} \le \widehat{\b G},
    \end{align}
    when tracked results in the closed loop system satisfying $\b \Pi(\b x_{\rm cl}(t)) \in \mathcal{C}_\mathcal{X}$ and $\b k(\b x_{\rm cl}(t), \b x_d(t))\in\mathcal{C}_\mathcal{U}$ for all $t\in I$.
\end{corollary}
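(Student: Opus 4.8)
The plan is to replay the proof of Theorem~\ref{thm:BezPoly} \emph{segment by segment} along the $k$-refinement, using the Splitting Property to write the control points of each restricted piece of $\b B$ as a linear image of the global control points $\b p$, and then to stack the resulting linear inequalities into one system $\widehat{\b F}\vec{\b p}\le\widehat{\b G}$.

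First I would invoke the Splitting Property on the refinement $\{[\time_{i-1},\time_i]\}_{i=1}^{k}$ to obtain matrices $\{\b Q_i\}$ such that, for each $i$, the piece $\b B|_{[\time_{i-1},\time_i]}$ coincides (after affine reparametrization of $[\time_{i-1},\time_i]$ onto $[0,\time]$) with the B\'ezier curve whose control points are the columns of $\b P\b Q_i$. Because $\b Q_i$ depends only on the order $p$ and the refinement ratios, the \emph{same} $\b Q_i$ applies to the degree-$p$ representation $\b b^{(\gamma)}(t)=\b p\b H^\gamma\b z(t)$ of $\b q_d^{(\gamma)}$, so the matching piece of $\b q_d^{(\gamma)}$ has control points given by the columns of $\b p\b H^\gamma\b Q_i$. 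Consequently, over $t\in[\time_{i-1},\time_i]$, the stacked signal $[\b x_d(t)^\top,\ \b q_d^{(\gamma)}(t)^\top]^\top$ lies in the convex hull of the columns of $\big[\b P^\top,\ (\b p\b H^\gamma)^\top\big]^\top\b Q_i$ --- no interval-length scale factors appear, since each block is obtained by applying $\b Q_i$ to its own control-point matrix.

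Next, on each subinterval I would run the same chain that proves Theorem~\ref{thm:BezPoly}: Lemma~\ref{lem:input_constraint} and Lemma~\ref{lem:state_constraint} reduce input and state satisfaction to an affine inequality of the form \eqref{eqn:Lambda_h}, and Lemma~\ref{lem:const} rewrites it as a linear inequality in $[\b x_d;\b q_d^{(\gamma)}]$ --- but now built around the local reference point $\bar{\b x}_i$, yielding matrices $\b L_{\b u}^{(i)},\b L_{\b x}^{(i)}$ and vectors $\b h_{\b u}^{(i)},\b h_{\b x}^{(i)}$ whose only $i$-dependence enters through the constants $K(\bar{\b x}_i)$, $\b f(\bar{\b x}_i)$, $\b g^{-1}(\bar{\b x}_i)$ appearing in Lemmas~\ref{lem:input_constraint}--\ref{lem:const}. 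By Property~\ref{prop:linBound} applied on $[\time_{i-1},\time_i]$, it suffices to impose these inequalities on the $p+1$ columns of $\big[\b P^\top,(\b p\b H^\gamma)^\top\big]^\top\b Q_i$; using $\text{vec}(\b M\b Q_i)=(\b Q_i^\top\otimes\b I)\text{vec}(\b M)$ together with the linear dependence of the stacked control-point matrix on $\b p$, this becomes a linear inequality $\widehat{\b F}_i\vec{\b p}\le\widehat{\b G}_i$. Stacking over $i=1,\ldots,k$ gives $\widehat{\b F}\triangleq[\widehat{\b F}_1^\top\cdots\widehat{\b F}_k^\top]^\top$ and $\widehat{\b G}\triangleq[\widehat{\b G}_1^\top\cdots\widehat{\b G}_k^\top]^\top$; since $I=\bigcup_{i=1}^{k}[\time_{i-1},\time_i]$, feasibility of \eqref{eqn:refinedConst} forces the state and input constraints on every subinterval and hence on all of $I$.

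I expect the main obstacle to be bookkeeping rather than new mathematics: one must check that the Splitting Property's reparametrization is applied consistently to the $\b x_d$-block and the $\b q_d^{(\gamma)}$-block (it is, precisely because the splitting matrices are curve-independent and share the same order $p$), and one must track how the local reference points $\bar{\b x}_i$ enter the constant right-hand-side terms of Lemmas~\ref{lem:input_constraint}--\ref{lem:const} so that $\widehat{\b G}$ is assembled correctly. With those handled, the argument is a direct block-wise replay of the proof of Theorem~\ref{thm:BezPoly}, with the single B\'ezier reachable polytope replaced by the intersection of the $k$ per-segment polytopes.
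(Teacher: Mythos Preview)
Your proposal is correct and follows essentially the same approach as the paper: apply the Splitting Property to express each subinterval's control points linearly in $\vec{\b p}$ via the matrices $\b Q_i$, invoke the constraint construction of Theorem~\ref{thm:BezPoly} on each piece, and stack. You are in fact more explicit than the paper's proof about how the per-segment reference points $\bar{\b x}_i$ enter the right-hand sides and how the splitting applies consistently to both the $\b x_d$ and $\b q_d^{(\gamma)}$ blocks, which is exactly the bookkeeping needed to make the terse argument rigorous.
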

\vfill
\begin{proof}
    As refinement is linear in the control points, we can leverage the matrices from Theorem~\ref{thm:BezPoly} and right multiply $\b F$ by $\vec{\b Q}_i$, the vectorized version of the refinement matrix $\b Q_i$ for $i=1,\ldots,k$ to produce $\widehat{\b F}$. Taking $\widehat{\b G} = \widehat{\b G}$ yields the desired result.
\end{proof}

\newpage
By enforcing the constraint in \eqref{eqn:refinedConst}, we are able to ensure that the desired trajectory stays close to the piecewise constant reference trajectory, as opposed to a single reference point. This will reduce the conservatism of the bound, but requires increasing the number of constraints needed (and therefore faces of the polytope), demonstrating an obvious tradeoff.
A depiction in the difference in resulting reachable sets can be seen in Figure~\ref{fig:n-step}. When a single points is used, the reachable set indicates the neighborhood around which that reference point can be feedback linearized, potentially requiring significant input over long time horizons. Instead, if we have a sequence of points, we can forward simulate the drift dynamics to produce reference trajectories, whereby the reachable set represents the neighborhood around the trajectory which we can converge to, thereby reducing conservatism. This notion is especially useful when using such reachable sets to represent an MPC layer, which often uses a sequence of reference points to linearize around.

\begin{figure}[t!]
    \centering
    \includegraphics[width=\columnwidth]{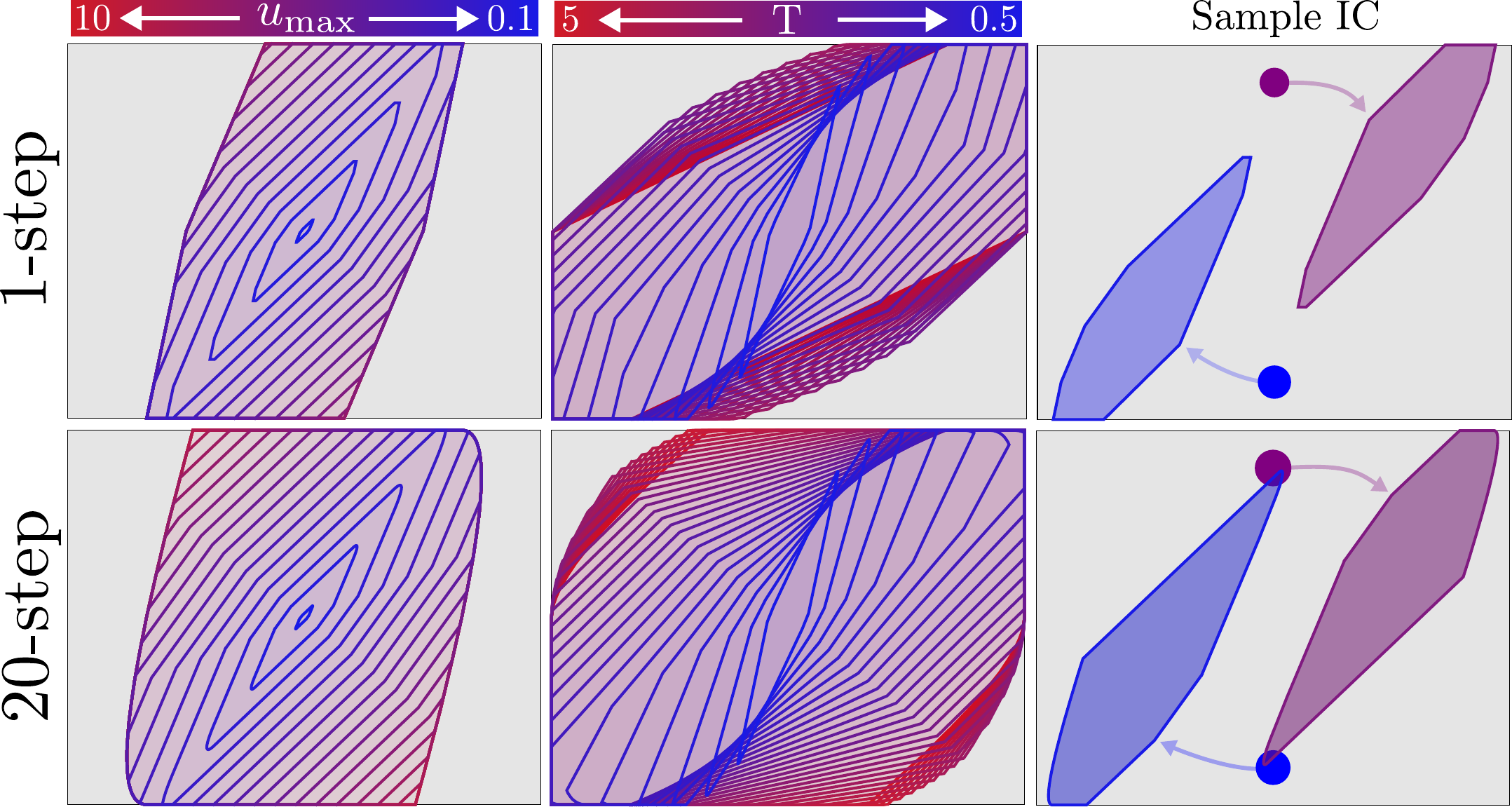}
    \vspace{-5mm}
    \caption{A depiction of the forward reachable sets as a function of system parameters. (Top Row) 1-step reachable sets, as in Theorem~\ref{thm:BezPoly}. (Bottom Row) 20-step reachable sets, as in Corollary~\ref{coll:ref}. (Left Column) Varying the input constraint $\u_{\text{max}}$ (Middle Column) Varying the time horizon T (Right Column) Varying the initial condition.}
    \vspace{-5mm}
    \label{fig:n-step}
\end{figure}

\section{Results}
\subsection{Simulation Results}

We deploy the use of B\'ezier Reachable Polytopes towards the task of swinging up the pendulum. The duration of planning horizon needed to accomplish this task depends highly on how tight the input constraint for the system is.
In this setup, the tracker was taken to be the feedback linearizing controller, and the planner produced trajectories on the pendulum dynamics. 
This planner-tracker was interfaced with a graph-search problem, which samples states uniformly from the state space and connects two vertices $\b v_i, \b v_j\in\mathcal{X}_d$ with an edge if the intersection of their forward and reachable sets were nonempty, i.e. $\mathcal{F}(\b v_i) \cap \mathcal{B}(\b v_j) \ne\emptyset$. This represents a graph of dynamically feasible B\'ezier curves, whereby a suitable B\'ezier curve between two boundary conditions can be found by solving a discrete graph search problem. As seen in Figure~\ref{fig:pendulum}, when the low level input constraints are tight, the graph search has to produce a long sequence of points to achieve pendulum swingup. Instead, if the input constraints are loose, then a nearly direct swingup behavior can be achieved. In this way, we observe that the computational complexity of the decision making layer is imposed by the limitations of the underlying full order system.
The code for this project is available at \cite{code}.

\begin{figure}
    \centering
    \includegraphics[width=\columnwidth]{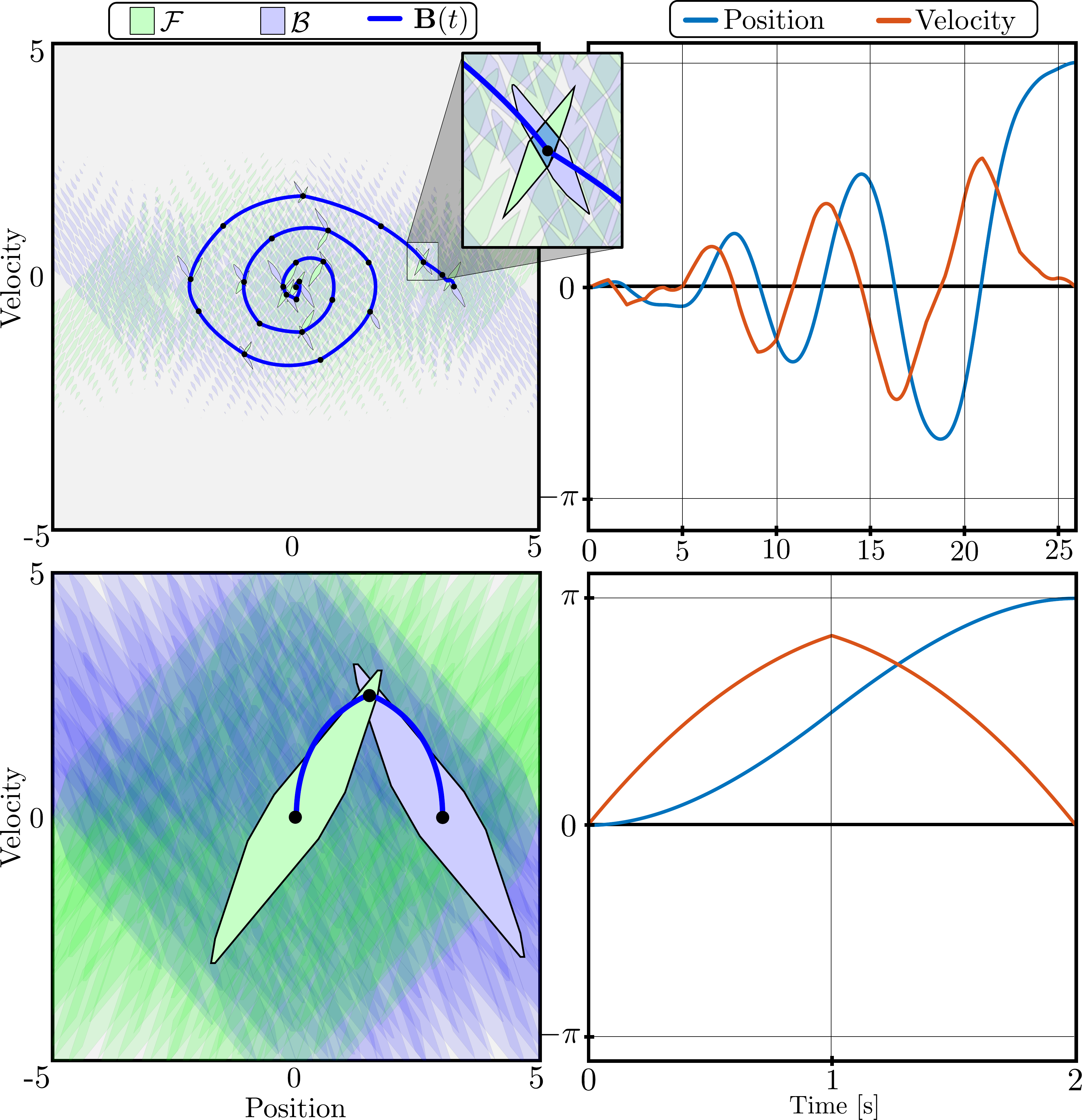}
    \vspace{-5mm}
    \caption{The proposed method applied to the pendulum swingup problem. As the input bounds are tightened from 5 Nm (bottom) to 0.5 Nm (top), the resulting graph search trajectory increases in complexity and length.}
    \vspace{-4mm}
    \label{fig:pendulum}
\end{figure}
\begin{figure*}
    \centering
    \includegraphics[width=\linewidth]{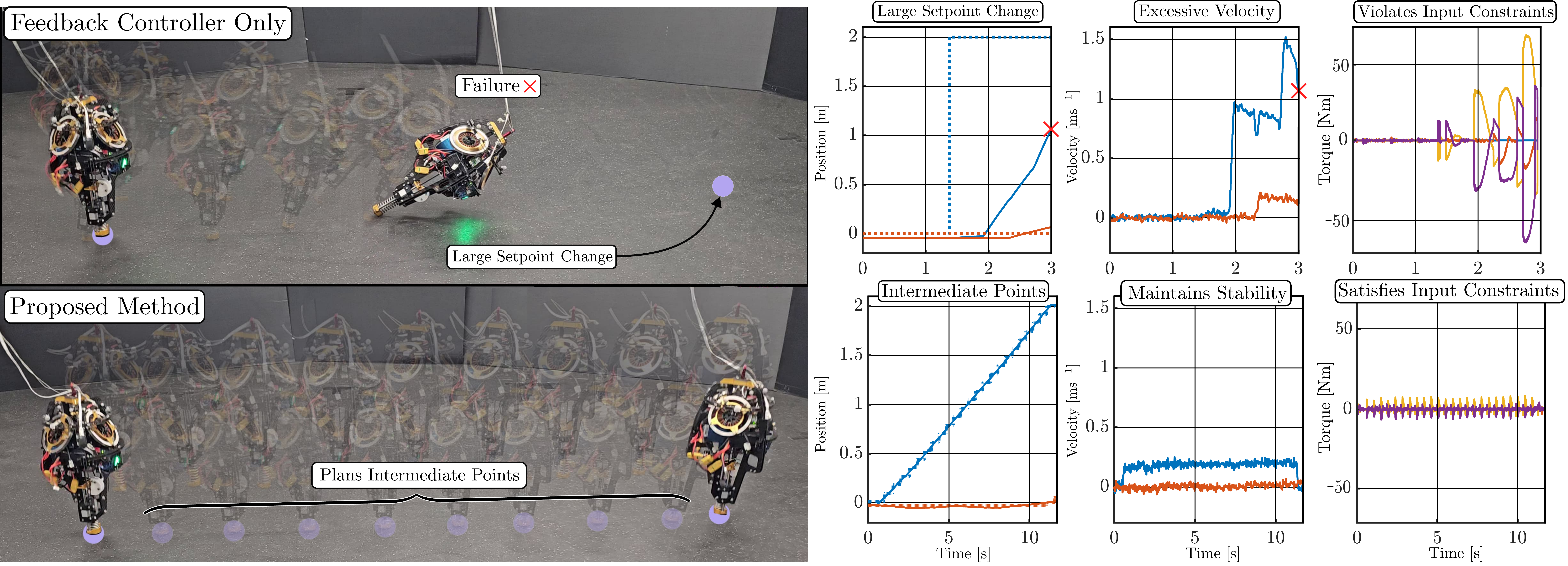}
    \vspace{-6mm}
    \caption{Hardware results on the 3D hopping robot, ARCHER. When commanded to cross the room, a naive decision making layer provides a setpoint to the planner which is outside what is achievable by the low-level system, leading to system failure. Instead, if B\'ezier Reachable Polytopes are used, the decision layer provides a sequence of waypoints to the planner that results in completion of the objective while satisfying state and input constraints.}
    \vspace{-5mm}
    \label{fig:experiment}
\end{figure*}

\subsection{Hardware Results}
We also deploy the B\'ezier Reachable Polytopes framework towards the control of a 3D hopping robot, ARCHER \cite{ambrose_creating_2022}, as seen in Figure~\ref{fig:experiment}. Let $(\b p, q)\in\R^3\times \mathbb{S}^3$ denote the global position and quaternion of the robot, and $(\b v, \b \omega)\in\R^3 \times \mathfrak{s}^3$ the global linear velocity and body frame angular rates. The full state of the robot $\b x\in\mathcal{X}\subset\R^{20}$ contains these values, as well as foot and flywheel positions and velocities. Planning long-horizon tasks for this robot is extremely challenging due to the large number of passive degrees of freedom, tight input constraints, and hybrid dynamics. Separating the path planning problem into a layered architecture consisting of a tracking controller, a planner, and a decision layer enables this task to be split up, whereby behavior can be generated efficiently. 

In this setup, we take the planning model to be a double integrator with state $\b x_d\in \mathcal{X}_d\triangleq \R^4$ and input $\b u_d \in \mathcal{U}_d \triangleq \R^2$. This planning model $\Sigma_d$ can be corresponded with the hopping robot $\Sigma$ by a projection map $\b \Pi:\mathcal{X} \to \R^4$ taken to be the restriction of the full order state to the center of mass $x$ and $y$ positions and velocities and an embedding $\b\Psi$, which is a Raibert-style controller that takes in desired center of mass state and input trajectories and produces desired orientation quaternions as:
\begin{align*}
    q_d(\b x, t) = \b K_{\rm fb} (\b \Pi(x) - \b x_d(t))
\end{align*}
with desired angular rates $\b \omega_d \equiv \b 0$. This desired quaternion is then tracked by a low-level controller $\b k$ as:
\begin{align*}
    \b k(\x, q_d, \b u_d) = -\b K_{\rm p}  \mathbb{I}\textrm{m}(q_{ d}^{-1} q) - \b K_{\rm d}(\b\omega - \b\omega_{ d}) + \b K_{\rm ff} \b u_d,
\end{align*}
which runs at 1 kHz. 
%
% For all experiments, the layered controller was running on an Ubuntu 20.04 machine with an AMD Ryzen 5950x @ 3.4 GHz, an NVIDIA GeForce RTX 4090, and 64 Gb RAM. 
As seen in Figure~\ref{fig:experiment}, if only the feedback layer is used, the system fails because the desired setpoint is outside the region of what can be accomplished by the tracking system. Instead, if the proposed method is used, the decision layer can autonomously produce a sequence of points which maintain stability and constraint satisfaction over the task.

\section{Conclusion}
In this work, we introduced the concept of B\'ezier Reachable Polytopes, which provide a representation of the set of points that can be reached by planner-tracker control frameworks. By leveraging the properties of B\'ezier polynomials, we showed that this set can be efficiently represented via a polytopic constraint, enabling computationally tractable long-horizon planning to be achieved. Future work includes developing an abstract theory for such hierarchical control systems and their interconnections. 

\section{Acknowledgements}
The authors would like to thank Andrew Taylor, Preston Culbertson, and Max Cohen for their many fruitful discussions and William Compton for his assistance both with theory with experiments.

\balance
\bibliographystyle{IEEEtran}
\bibliography{main.bib}
\clearpage\newpage

\end{document}